\documentclass[10pt]{article} 
\usepackage{amsthm, amssymb}
\usepackage[accepted]{tmlr}


\usepackage{amsmath,amsfonts,bm}

\newcommand{\nc}{\mathsf{n}}
\newcommand{\Dc}{\mathcal{D}}

\newcommand{\Eb}{\mathbb{E}}
\newcommand{\Rb}{\mathbb{R}}

\newcommand{\one}{\mathbf{1}}
\newcommand{\zero}{\mathbf{0}}

\newcommand{\thetav}{{\bm \theta}}

\newcommand{\feature}{{\vg_{\thetav}}}

\newcommand{\vanilla}{\mathds{1}}









\def\eqref#1{equation~\ref{#1}}









\def\1{\bm{1}}








\def\va{{\bm{a}}}
\def\vb{{\bm{b}}}

\def\vg{{\bm{g}}}

\def\vw{{\bm{w}}}
\def\vx{{\bm{x}}}
\def\vy{{\bm{y}}}
\def\vz{{\bm{z}}}


\def\mA{{\bm{A}}}

\def\mI{{\bm{I}}}

\def\mP{{\bm{P}}}

\def\mU{{\bm{U}}}

\def\mW{{\bm{W}}}

\DeclareMathAlphabet{\mathsfit}{\encodingdefault}{\sfdefault}{m}{sl}
\SetMathAlphabet{\mathsfit}{bold}{\encodingdefault}{\sfdefault}{bx}{n}













\usepackage{booktabs, tabularx}
\usepackage{multicol}
\usepackage{multirow}
\usepackage[hidelinks]{hyperref}
\usepackage{cleveref}
\usepackage{graphicx}
\usepackage{thm-restate}
\usepackage{url}
\usepackage{subcaption}
\usepackage{dsfont}
\usepackage{xcolor}

\newcommand{\blue}[1]{{\color{black}#1}}

\definecolor{db}{rgb}{0,0.08,0.45}
\hypersetup{
    colorlinks,
    citecolor=db,
    linkcolor=db,
    filecolor=db,      
    urlcolor=db,
}

\title{Understanding the Role of Layer Normalization in Label-Skewed Federated Learning}

\author{\name Guojun Zhang
      \email guojun.zhang@huawei.com \\
      \addr Huawei Noah's Ark Lab
      \AND
      \name Mahdi Beitollahi \email mahdi.beitollahi@huawei.com \\
      \addr Huawei Noah's Ark Lab
      \AND
      \name Alex Bie 
      \email alexbie@huawei.com\\
      \addr Huawei Noah's Ark Lab
      \AND
      \name Xi Chen \email xi.chen4@huawei.com\\
      \addr Huawei Noah's Ark Lab}



\newcommand{\un}[1]{\underline{#1}}

\newtheorem{assumption}{Assumption}

%
\usepackage{newfloat}
\usepackage{listings}
\usepackage{enumitem}
\usepackage{wrapfig}

\begin{document}

\maketitle

\begin{abstract}
Layer normalization (LN) is a widely adopted deep learning technique especially in the era of foundation models. Recently, LN has been shown to be surprisingly effective in federated learning (FL) with non-i.i.d.~data. However, exactly why and how it works remains mysterious. In this work, we reveal the profound connection between layer normalization and the label shift problem in federated learning. To understand layer normalization better in FL, we identify the key contributing mechanism of normalization methods in FL, called \emph{feature normalization} (FN), which applies normalization to the latent feature representation before the classifier head. Although LN and FN do not improve expressive power, they control feature collapse and local overfitting to heavily skewed datasets, and thus accelerates global training. 
Empirically, we show that normalization leads to drastic improvements on standard benchmarks under extreme label shift. Moreover, we conduct extensive ablation studies to understand the critical factors of layer normalization in FL. Our results verify that FN is an essential ingredient inside LN to significantly improve the convergence of FL while remaining robust to learning rate choices, especially under extreme label shift where each client has access to few classes. Our code is available at \url{https://github.com/huawei-noah/Federated-Learning/tree/main/Layer_Normalization}.
\end{abstract}

\section{Introduction}

\begin{wrapfigure}[12]{r}{0.45\textwidth} 
\vspace{-2em}
\caption{\small Visualization of label shift.}
\includegraphics[width=0.45\textwidth]{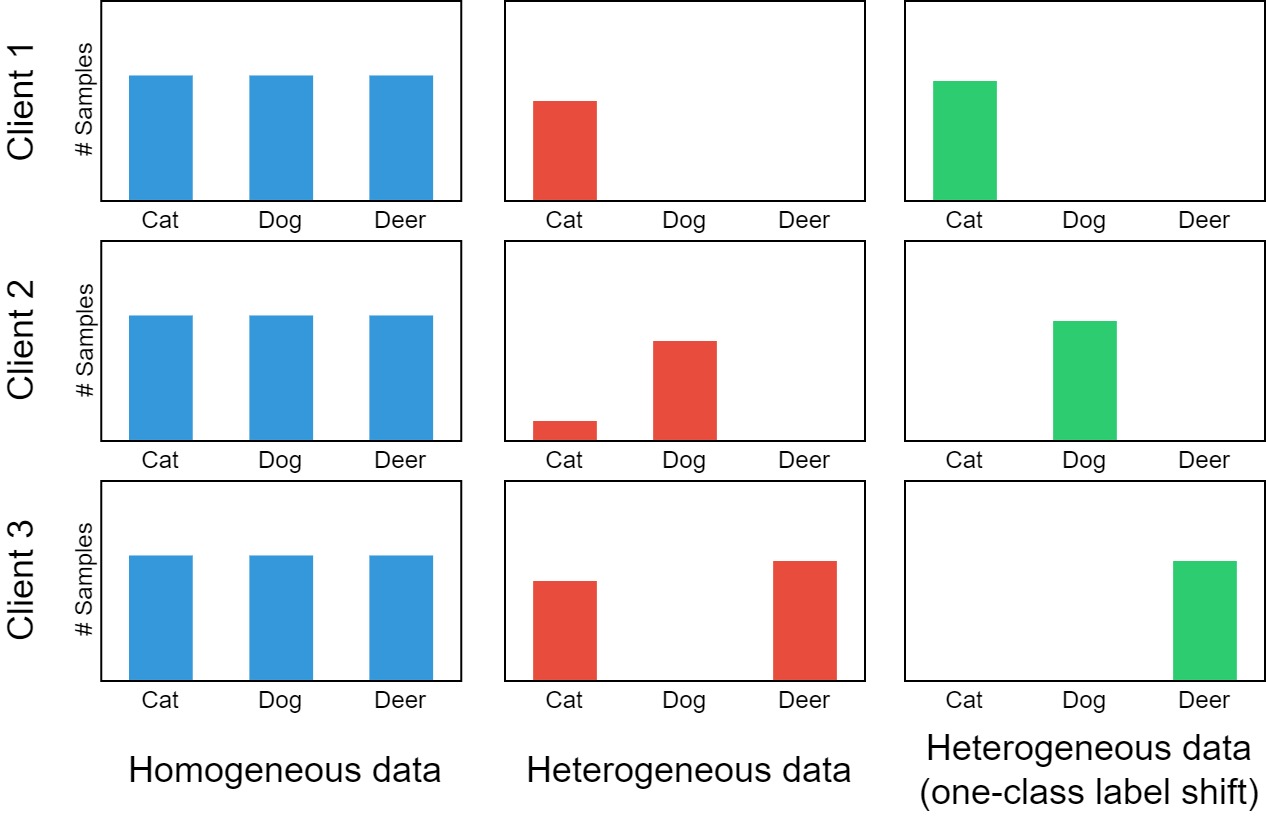}
\label{fig:label_shift}
\end{wrapfigure} 

Federated learning (FL, \citealt{mcmahan2017communication}) is a privacy-enhancing distributed machine learning approach that allows clients to collaboratively train a model without exchanging raw data. A key challenge in FL is to handle data heterogeneity, where clients may have differing training data. 

\emph{Label shift} is a crucial data heterogeneity scenario where individual clients have varied label distributions. For instance: client $1$ is a cat lover and only has cat images, while client $2$ has a lot of dog images and few cat images. Such difference in client label distributions can cause substantial disagreement between the local optima of clients and the desired global optimum, which negatively impacts FL performance. Taking label shift to the limit, a client may only have access to only \emph{one class} data, as illustrated in \Cref{fig:label_shift}. This is the case for problems such as speaker identification, where each user has only their own voice samples. Our study focuses on such \emph{extreme label shift}, where baseline FL approaches see drastic drops in utility.

Various methods have been proposed to mitigate the data heterogeneity problem in FL. For example, FedProx \citep{li2020federated} adds $\ell_2$ regularization to avoid client model divergence. SCAFFOLD \citep{karimireddy2020scaffold} applies variance reduction to control the variance of client models. These frameworks focus on regularizing client drift in the FL process. FedLC \citep{zhang2022federated} and FedRS \citep{li2021fedrs} attempt to address label shift by adjusting clients' model logits.
We find that these techniques do not suffice to obtain good performance in the extreme label shift setting (see Table~\ref{tbl:comp-sota}).

\begin{table*}[!t]
  \caption{Comparing normalization methods with state-of-the-art federated learning approaches. $n$ class(es) means that each client has only access to data from $n$ class(es). Dir($\beta$) denotes partitioning with symmetric Dirichlet distribution \citep{hsu2019measuring}. \blue{Our setup includes 10 clients for the CIFAR-10 dataset, 50 and 20 clients for the CIFAR-100 dataset with $n$ class(es) and Dir(0.1), respectively, and 200 clients for TinyImageNet. Normalization (FedLN \& FedFN) beats alternatives in a wide variety of settings; see Section \ref{sec:efficacy} for further details.}}
  \label{tbl:comp-sota}
  \centering
  \resizebox{\textwidth}{!}{\begin{tabular}{lccccccccc}
    \toprule
     \multirow{2}{*}{Methods}  & \multicolumn{3}{c}{CIFAR-10}  & \multicolumn{3}{c}{CIFAR-100}  & \multicolumn{3}{c}{TinyImageNet}  \\
    \cmidrule(r){2-4} \cmidrule(r){5-7} \cmidrule(r){8-10} 
     & 1 class         & 2 classes            & Dir(0.1)      & 2 classes           & 5 classes            & Dir(0.1)       & Dir(0.01)     & Dir(0.02)      & Dir(0.05)    \\
    \midrule
    FedAvg    & 55.0         & 65.6            & 71.7          & 32.3           & 36.2            & 38.1           & 21.2          & 20.9           & 21.9       \\ 
    \midrule
    FedProx   & 54.7         & 66.7            & 71.4          & 32.4           & 36.6            & 38.8           & 21.0          & 20.8           & 22.1        \\ 
    SCAFFOLD  & 55.3         & 67.1            & 71.1          & 32.3           & 36.6            & 38.7          & 20.7          & 20.7           & 21.7
    \\ 
    FedLC     & 10.0         & 57.9            & 65.0          & 6.0            & 18.2            & 22.3           & 0.8           & 0.4            & 0.5        \\  
    FedDecorr & 43.9         & 70.2            & 69.9          & 29.2           & 33.0            & 34.2           & 22.2          & 22.4           & 1.5        \\  
    FedRS     & 10.0         & 56.7            & 66.1          & 10.7           & 24.9            & 22.1           & 14.8          & 18.4           & 19.0       \\  
    FedYogi   & \un{80.9}         & 80.0            & 79.2          & \un{44.6}    & 44.1            & 45.2           & 22.5          & 24.3           & 25.7       \\ 
    \midrule
    FedLN        & \bf{87.2}    & \bf{88.0}       & \bf{89.1}     & \bf{46.6}      & \bf{46.2}       & \bf{47.3}      & \bf{38.2}     & \bf 39.1        & \bf38.1    \\ 
    FedFN        & 80.8  & \un{82.3}     & \un{84.2}   & 40.3           & \un{45.4}     & \un{45.4}    & \un{34.0}   & \un{34.1}    & \un{34.1}       \\ 
    \bottomrule
  \end{tabular}}
  \vspace{-1.0em}
\end{table*}

An orthogonal approach to the aforementioned methods is through normalizing the neural networks. In \citet{hsieh2020non}, the authors point out the advantage of group normalization \citep{wu2018group} in label-shifted FL problems over batch normalization \citep{ioffe2015batch}. \citet{du2022rethinking} attributes the shortcomings of batch normalization to external covariate shift. There has also been studies such as \citet{li2020fedbn} and \citet{wang2023batch} that modify batch normalization for FL. Most closely related to our work is the very recent study of \citet{casella2023experimenting} that experimentally compares various normalization methods in federated learning on MNIST and CIFAR-10. 

\paragraph{Our contributions.} Previous work lacks an in-depth analysis of layer normalization, and does not explain when and why layer normalization works in FL. In this work, we dive deeply into the theoretical and experimental analysis of layer normalization for federated learning. Our main finding is that \emph{as client label distributions become more skewed, layer normalization helps more.} This is because under heavy label shift, each client easily overfits its local dataset without normalization, and LN effectively controls local overfitting by mitigating feature collapse. For example, in the extreme one-class setting (\Cref{fig:label_shift}), LN can improve over FedAvg by  
$\sim$ \textbf{$\boldsymbol{32}$\%} in terms of absolute test accuracy (see \Cref{tbl:comp-sota}).

To further understand the effect of LN for FL, we ask the following question: \emph{is there a much simplified mechanism that works equally well as LN?} The answer is positive. From our analysis of local overfitting and feature collapse, we discover that the key contributing mechanism of normalization methods in FL is \emph{feature normalization} (FN), which applies normalization to the latent feature representation before the classifier head. Feature normalization simplifies LN while retaining similar performance (\Cref{tbl:comp-sota}). 

Another in-depth analysis we make is the comprehensive experiments and ablation studies. We compare LN and FN methods on a variety of popular datasets including CIFAR-10/100, TinyImageNet and PACS, with various data heterogeneity and neural architectures such as CNN and ResNet. We are the \emph{first} to show that layer normalization is the best method so far for label shift problems, outperforming existing algorithms like FedProx, SCAFFOLD and FedLC. Moreover, our ablation studies thoroughly analyze each factor of LN, including with/without mean-shift, running mean/variance, and before/after activation. Our empirical analysis is much deeper and broader than any of the previous results. 

We summarize our contributions as follows:
\begin{itemize}[itemsep=0pt, topsep=0pt]
\item Under extreme label shift, we provide a comprehensive benchmark to clearly show the dramatic advantage of LN over popular FL algorithms for data heterogeneity;
\item We are the first to propose and carefully analyze the suitability and properties of layer normalization in label-skewed FL problems, both theoretically and empirically;
\item We discover the key mechanism of LN in such problems, feature normalization, which simply normalizes pre-classification layer feature embeddings.
\end{itemize}

\section{Preliminaries and Related Work}

In this section, we review the necessary background for our work, including federated learning and layer normalization.

\vspace{0.4em}

\noindent \textbf{Federated learning (FL)} aims to minimize the objective $f(\thetav, \mW) = \sum_{k=1}^K \tfrac{m_k}{m} f_k(\thetav, \mW)$, with each client loss as:
\begin{align}\label{eq:per_client}
f_k(\thetav, \mW) := \Eb_{(\vx, y)\sim \Dc_k} [\ell(\thetav, \mW; \vx, y)],
\end{align}
Here $m_k$ is the number of samples of client $k$, and $m = \sum_k m_k$. We denote $\Dc_k$ as the data distribution of client $k$ and we have $m_k$ i.i.d.~samples from it to estimate $f_k$. We use $\thetav$ to represent the parameter collection of the feature embedding network $\feature$ and $\mW = (\vw_1, \dots, \vw_C)$ as the softmax weights for $C$-class classification. The per-sample loss is thus: 
\begin{align}\label{eq:per_sample}
\ell(\thetav, \mW; \vx, y) = -\log \frac{\exp(\vw_y^\top \feature(\vx))}{\sum_{c\in [C]} \exp(\vw_c^\top \feature(\vx))}.
\end{align}

In the classical FL protocol \citep[FedAvg,][]{mcmahan2017communication}, a central server distributes the current global model to a subset of participating clients. During each communication round, selected clients perform multiple steps of gradient update on the received model with its local data and upload the updated model to the server. Finally, the server aggregates the clients' models to refine the global model. 

\vspace{0.2em}
\noindent \textbf{Class imbalance problem in FL.} Since its inception, federated learning studies the class imbalance problem, i.e., different clients have different label distributions. In the seminal FL paper \citep{mcmahan2017communication}, the authors considered partitioning the dataset with different shards, and giving each client two shards. In this way, each client would have two class labels. Another common way to create such class imbalance is to use Dirichlet distribution \citep{hsu2019measuring}. FedAwS \citep{yu2020federated} considered the extreme case when we have one class per each client. FedProx \citep{li2020federated}, FedLC \citep{zhang2022federated} and SCAFFOLD \citep{karimireddy2020scaffold} addressed class imbalance by modifying the objective or the model aggregation. FedOpt \citep{reddi2020adaptive} considered adaptive aggregation as inspired by adaptive optimization such as Adam \citep{KingmaBa14}. More recently, FedDecorr \citep{shi2023towards} pointed out the feature collapse in federated learning and proposes a new algorithm to mitigate data heterogeneity. \blue{\citet{shen2022agnostic} proposed CLIMB which studies class imbalance FL problems using constrained optimization.} 
\vspace{0.2em}

\noindent \textbf{Normalization.} Suppose $\vx \in \Rb^d$ is a vector, we define the \emph{mean-variance} (MV) normalization of $\vx$ as:
\begin{align}\label{eq:MV_norm}
\begin{split}
 & \nc_{\sf MV}(\vx) := \frac{\vx - \mu(\vx) \one}{\sigma(\vx)}, \, \mu(\vx) = \frac{1}{d}\sum_{i=1}^d x_i, \, \sigma(\vx) = \sqrt{\frac{1}{d}\sum_{i=1}^d (x_i - \mu(\vx))^2}.
\end{split}
\end{align}
Here $\mu$ and $\sigma$ are standard notions of mean and variance. Symbolically, a layer normalized feed-forward neural net is a function of the following type:
\begin{align}\label{eq:LN_comp}
\nc_{\sf MV} \circ \rho \circ \mA_{L} \circ \nc_{\sf MV} \circ \rho \circ \mA_{L - 1} \dots \nc_{\sf MV} \circ \rho \circ \mA_1,
\end{align}
where $\rho$ is the activation and $\mA_i(\va) = \mU_i \va + \vb_i$ is an affine function for a vector $\va$. 

We can generalize MV normalization to tensors by considering the mean and variance along each dimension. Based on the discussion above, \emph{batch normalization} (BN, \citealt{ioffe2015batch}) is just MV normalization along the direction of samples, and \emph{layer normalization} (LN, \citealt{ba2016layer}) is MV along the direction of hidden neurons. 
Unlike BN, layer normalization can be applied to batches of any size and does not require statistical information of batches of each client. Other normalization methods include group normalization \citep{wu2018group}, weight normalization \citep{salimans2016weight} and instance normalization \citep{ulyanov2016instance}. \blue{Other than normalization on the neural network, \citet{francazi2023theoretical} proposed normalizing the per-class gradients to address the class imbalance problem.}  

MV normalization can be rewritten as:
\begin{align}\label{eq:decomposition}
\nc_{\sf MV}(\vx) = \sqrt{d}\frac{\vx - \mu(\vx) \one}{\|\vx - \mu(\vx) \one\|} = \sqrt{d}\cdot \nc'(\vx - \mu(\vx) \one),
\end{align}
with $\nc'(\vx) = {\vx}/{\|\vx\|}$. In other words, it is a composition of mean shift, division by its norm, and a scaling operation with factor $\sqrt{d}$ \citep[see also][]{brody2023expressivity}. If $\mu(\vx) = \zero$, we obtain $$\nc(\vx) = \sqrt{d}\cdot \nc'(\vx).$$ We call the function $\nc$ as the \emph{scale normalization}, which shares similarity with RMSNorm \citep{zhang2019root}. This function retains scale invariance but loses shift invariance. To improve stability, we can replace $\vx/\|\vx\|$ with $\vx/\max\{\epsilon, \|\vx\|\}$.

We may similarly construct another normalized neural network by replacing $\nc_{\sf MV}$ with $\nc$ in eq.~\ref{eq:FN_comp}:
\begin{align}\label{eq:FN_comp}
\nc \circ \rho \circ \mA_{L} \circ \nc \circ \rho \circ \mA_{L - 1} \dots \circ \nc \circ \rho \circ \mA_1,
\end{align}
which we will call a \emph{feature normalized (FN)} neural net. This name will be clear with \Cref{prop:last_layer_fn}.

\noindent \textbf{Normalization in FL.} The fact that group norm \citep{wu2018group} is better than batch norm in FL was observed in \citet{hsieh2020non}. \citet{li2020fedbn} proposed
FedBN and adapted batch normalization in FL, where each client model is personalized.  Analysis of batch norm in FL can be found in \citet{du2022rethinking} and \citet{wang2023batch}. In \citet{casella2023experimenting}, the authors tested different normalization methods in FL on non-i.i.d.~settings. Compared to the these papers, we are the first to observe and analyze the connection between layer/feature normalization and label shift.

\section{\blue{Efficacy of Layer Normalization}}\label{sec:efficacy}

\paragraph{Experimental setting.} To show the \blue{efficacy} of layer normalization in label-skewed FL problems, we conduct an extensive benchmark experiment to compare FedAvg + LN (FedLN) and FedAvg + FN (FedFN) with several popular FL algorithms, including the original FedAvg \citep{mcmahan2017communication}, in addition to:
\begin{itemize}[itemsep=0pt, topsep=0pt]
\item Regularization based methods: FedProx \citep{li2020federated}, SCAFFOLD \citep{karimireddy2020scaffold} and FedDecorr \citep{shi2023towards};
\item Logit calibration methods for label shift: FedLC \citep{zhang2022federated} and FedRS \citep{li2021fedrs};
\item Adaptive optimization: FedYogi \citep{reddi2020adaptive}.
\end{itemize}
To simulate the label shift problem, we create two types of data partitioning. One is $n$ class(es) partitioning  where each client has only access to data from $n$ class(es). Another is Dirichlet partitioning from \citet{hsu2019measuring}. As discussed in \citet{hsieh2020non}, the label skew problem is pervasive and challenging for decentralized training.

We test the comparison on several common datasets including CIFAR-10, CIFAR-100 \citep{krizhevsky2009learning} and TinyImageNet \citep{le2015tiny} with CNN and ResNet-18 \citep{he2016deep}. 

\paragraph{Results.} Our results are displayed in \Cref{tbl:comp-sota}. We conclude that:
\begin{itemize}[itemsep=0pt, topsep=0.3pt]
\item Under extreme label shift, all baseline algorithms do not show a clear edge over the vanilla FedAvg algorithm, except FedYogi. This demonstrates the frustrating challenge of label skewness.
\item FedYogi can drastically improve FedAvg in such cases (e.g.~CIFAR-10), but on larger datasets like TinyImageNet, the improvement is marginal.
\item FedLN is the only algorithm that can dramatically improve FedAvg in all scenarios.
\item FedFN largely captures the performance gain of FedLN, despite being consistently inferior to FedLN.
\end{itemize}
\vspace{0.2em}
Because of the drastic improvement of layer norm and feature norm in label-skewed problems, we will carefully explore the reasons behind, both theoretically and experimentally, in the following sections.

\section{Why Is Layer Normalization So Helpful?}

Inspired by the success of layer/feature norms, we wish to obtain some primary understanding of the reason why they are so suitable under label shift. Our main findings include: ({\bf a}) Layer norm and feature norm can be reduced to the last-layer normalization; ({\bf b}) the last-layer normalization helps address feature collapse which happens in the local overfitting of one-class datasets. This accelerates the training of feature embeddings. ({\bf c}) the main advantages of LN/FN lie in training process, rather than the expressive power.

\subsection{Delegating scaling to the last layer}

We first show that both feature normalized and layer normalized neural networks can be simplified to the last-layer scaling, under the assumption of scale equivariance.

\begin{assumption}\label{assmp:scale_equiv}
The bias terms from the second layer onward are all $\zero$, i.e., $\vb_i = \zero$ for $i = 2, \dots, L$, and the activation function is (Leaky) ReLU.

\end{assumption}

\noindent For an input $\vx$, denote the $i$th activation as $\va_i := \rho \circ \mA_i \dots \rho \circ \mA_1(\vx)$.
Under \Cref{assmp:scale_equiv}, a vanilla neural network is scale equivariant w.r.t.~the first layer activation, i.e., suppose $\lambda > 0$ and $h := \rho \circ \mA_L \dots \rho \circ \mA_2$ is the neural network function from the second layer then $h(\lambda \va_1) = \lambda h(\va_1)$ holds. \Cref{assmp:scale_equiv} is also necessary for such equivariance.

\begin{restatable}[\textbf{reduced feature normalization}]{proposition}{lastLayer}\label{prop:last_layer_fn}
Under \Cref{assmp:scale_equiv}, scale normalizing each layer is equivalent to only scale normalizing the last layer. That is, for all \blue{affine transformations} $\mA_1,\dots,\mA_L$ the function 
$${\color{blue} \nc} \circ \rho \circ \mA_{L} \circ {\color{blue} \nc} \circ \rho \circ \mA_{L - 1} \dots {\color{blue} \nc} \circ \rho \circ \mA_1$$ is equal to
\begin{align}\label{eq:feature_norm}
{\color{blue} \nc} \circ \rho \circ \mA_{L} \circ \rho \circ \mA_{L - 1} \dots \rho \circ \mA_1
\end{align}
\blue{if all the intermediate hidden vectors after activation are non-zero}.
\end{restatable}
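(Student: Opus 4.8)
The plan is to peel off the normalization layers one at a time, from the last layer inward, using scale invariance of $\nc$ together with scale equivariance of the sub-network above each normalization point (which holds under \Cref{assmp:scale_equiv}). The key identity is that $\nc(\lambda \va) = \nc(\va)$ for every $\lambda > 0$ and every nonzero $\va$ — this is immediate from $\nc(\vx) = \sqrt{d}\,\vx/\|\vx\|$, since positive scalars cancel between numerator and denominator.

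First I would set up notation: let $\va_1 = \rho\circ\mA_1(\vx)$ be the first activation of the \emph{unnormalized} network, and write $g_j$ for the tail map $\nc \circ \rho \circ \mA_L \circ \cdots \circ \nc \circ \rho \circ \mA_j$ (so the fully normalized network is $g_1 \circ$ nothing, i.e.\ $\nc \circ \rho \circ \mA_1$ fed into $g_2$, etc.). The induction is on the number of interior $\nc$'s removed. For the base step, consider the innermost normalization applied to $\va_1$: I claim $\nc\bigl(\rho\circ\mA_1(\vx)\bigr)$ can be replaced by $\rho\circ\mA_1(\vx)$ without changing the output, provided the sub-network $h := \rho\circ\mA_L\circ\cdots\circ\rho\circ\mA_2$ above it (in the \emph{partially} normalized network) is scale equivariant. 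Under \Cref{assmp:scale_equiv}, as noted in the text, $h(\lambda\va_1)=\lambda h(\va_1)$; more generally each tail built from ReLU/Leaky-ReLU layers with zero bias (and with $\nc$'s interspersed, which are themselves scale invariant hence trivially commute with positive scaling) is positively homogeneous of degree $0$ or $1$. Writing $\nc(\va_1) = c\,\va_1$ with $c = \sqrt{d}/\|\va_1\| > 0$ (finite and nonzero because $\va_1 \neq \zero$ by hypothesis), equivariance of the tail gives that feeding $\nc(\va_1)$ versus feeding $\va_1$ produces outputs differing only by the positive scalar $c$ — which is then annihilated by the outermost $\nc$. Hence the innermost $\nc$ is redundant.

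For the inductive step, suppose the innermost $k$ interior normalizations have already been shown removable, so the network equals $\nc \circ \rho \circ \mA_L \circ \cdots \circ \nc \circ \rho \circ \mA_{j} \circ \rho \circ \mA_{j-1} \circ \cdots \circ \rho \circ \mA_1$ for the appropriate index $j$. Now the $\nc$ sitting after $\mA_j$ receives a nonzero vector $\va_j$ (again by the hypothesis that all post-activation hidden vectors are nonzero — this is exactly where that assumption is used, to ensure $\|\va_j\| \neq 0$ so $\nc$ is well defined and acts as multiplication by a positive scalar), and the sub-network strictly above this $\nc$ is a composition of $\nc$'s and zero-bias (Leaky)ReLU affine layers, hence positively homogeneous; combined with the final outer $\nc$ that swallows any surviving positive scalar, this $\nc$ too is redundant. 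Iterating until only the outermost $\nc$ (after $\mA_L$) survives yields eq.~\ref{eq:feature_norm}.

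The main obstacle — really the only subtlety — is bookkeeping the homogeneity degrees correctly through the mixed composition: each unnormalized (Leaky)ReLU layer with zero bias is degree-$1$ homogeneous, each $\nc$ is degree-$0$, so a tail containing at least one $\nc$ is degree-$0$ and a tail with none is degree-$1$; in either case a positive input scalar produces a \emph{nonnegative} (in fact positive, on nonzero vectors) output scalar, which is what lets the next $\nc$ up the chain absorb it. One must also be careful that \Cref{assmp:scale_equiv} only forces bias zero from layer $2$ onward, so $\mA_1$ may carry a bias — but this is harmless, since the first normalization we ever remove is the one \emph{after} $\mA_1$, and scale invariance of $\nc$ is needed at the top, not the bottom. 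I would state the homogeneity fact as a one-line sub-lemma and then run the induction; no routine calculation beyond $\nc(\lambda\vx)=\nc(\vx)$ is required.
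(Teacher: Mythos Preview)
Your proposal is correct and rests on the same mechanism as the paper: positive homogeneity of zero-bias (Leaky)ReLU layers lets the positive scalar introduced by each interior $\nc$ propagate and be absorbed by an outer $\nc$. The paper organizes the induction a bit differently---it compares the two networks layer by layer, proving the $i$-th normalized activation is a positive multiple of the $i$-th unnormalized activation ($\va_i = \lambda_i \va'_i$), rather than peeling off one interior $\nc$ at a time---but the content is the same and your bookkeeping of homogeneity degrees through mixed $\nc$/affine compositions is sound.
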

\blue{The formal statement of this proposition can be found in \Cref{prop:last_layer_fn_2}.} In this proposition, we used blue color to highlight the normalization components before and after the simplification. Details of ~\Cref{prop:last_layer_fn} can be found in the appendices. 
Eq.~\ref{eq:feature_norm} can be considered as scale normalizing the feature embedding with a vanilla network, which is why we called it feature normalization in eq.~\ref{eq:FN_comp}. 
Note that \Cref{prop:last_layer_fn} tells us that starting from the same random initialization for $\mA_1,\dots,\mA_L$, training both formats will result in the same final network.

Similarly, we can simplify a layer-normalized neural network, by defining the shift operator:
$$
{\sf s}(\vx) = \vx - \mu(\vx) \one,
$$
where $\vx$ is an arbitrary finite-dimensional vector.

\begin{restatable}[\textbf{reduced layer normalization}]{proposition}{reducedLN}\label{prop:reduce_LN}
Under Assumption \ref{assmp:scale_equiv}, MV normalizing each layer is equivalent to only MV normalizing the last layer and shifting previous layers. That is, for all \blue{affine transformations} $\mA_1,\dots,\mA_L$ the function
$${\color{blue} \nc_{\sf MV}} \circ \rho \circ \mA_{L} \circ {\color{blue} \nc_{\sf MV}} \circ \rho \circ \mA_{L - 1} \dots {\color{blue} \nc_{\sf MV}} \circ \rho \circ \mA_1$$ is equal to
\begin{align}
{\color{blue} \nc_{\sf MV}} \circ \rho \circ \mA_{L} \circ {\color{blue} \sf s} \circ \rho \circ \mA_{L - 1} \dots \circ {\color{blue} \sf s} \circ \rho \circ \mA_1
\end{align}
\blue{if none of the intermediate hidden layer activations to normalize are proportional to the all-one vector $\one$}.
\end{restatable}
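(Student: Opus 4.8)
The plan is to run the same positive-homogeneity argument used for Proposition~\ref{prop:last_layer_fn}, with the one extra wrinkle that MV normalization carries a shift as well as a scaling: the scaling part commutes forward through the network and is absorbed downstream, while the shift part does not commute past a linear map and must therefore be retained at every intermediate layer. First I would record the decomposition $\nc_{\sf MV} = \nc \circ \shift$ from eq.~\ref{eq:decomposition}. Writing $\vy := \shift(\va) = \va - \mu(\va)\one$, we get $\nc_{\sf MV}(\va) = \nc(\vy) = \sqrt{d}\,\vy/\|\vy\|$, so $\nc_{\sf MV}(\va) = c(\va)\,\shift(\va)$ with $c(\va) := \sqrt{d}/\|\shift(\va)\| > 0$, which is valid exactly when $\shift(\va)\neq\zero$, i.e.~when $\va$ is not proportional to $\one$. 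Two elementary facts round this out: $\shift$ is linear, hence commutes with scalar multiplication; and $\nc_{\sf MV}$ is invariant under multiplying its argument by a positive scalar, since $\mu$ and $\sigma$ are both positively homogeneous of degree one.

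Next I would invoke Assumption~\ref{assmp:scale_equiv}: for $i \geq 2$ the affine map $\mA_i(\va) = \mU_i\va$ has zero bias and is linear, and $\rho$, being (Leaky) ReLU, is positively homogeneous, so $\va \mapsto \rho(\mA_i(\va))$ satisfies $\rho(\mA_i(\lambda\va)) = \lambda\,\rho(\mA_i(\va))$ for all $\lambda > 0$. This is exactly the transport property that lets a positive scalar produced by one normalization layer travel forward unchanged until it reaches the next normalization, where it is killed by the invariance above. The first map $\mA_1$ may carry an arbitrary bias and needs no special treatment, since nothing precedes it.

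The heart of the proof is an induction on depth $i = 1, \dots, L$. Let $\vq_i$ be the pre-normalization activation at depth $i$ in the fully layer-normalized network and $\vq_i'$ the corresponding activation in the reduced network, in which $\shift$ replaces $\nc_{\sf MV}$ at depths $1, \dots, L-1$. I claim $\vq_1 = \vq_1'$ and $\vq_i = c_{i-1}\vq_i'$ for some $c_{i-1} > 0$ when $i \geq 2$. The base case holds since both networks start with $\rho \circ \mA_1$. For the step, $\nc_{\sf MV}(\vq_i) = c(\vq_i)\,\shift(\vq_i) = \big(c(\vq_i)c_{i-1}\big)\,\shift(\vq_i')$ by linearity of $\shift$, so the depth-$i$ post-normalization vector of the LN network is a positive multiple of $\shift(\vq_i')$, which is precisely the depth-$i$ post-normalization vector of the reduced network; pushing this through $\rho \circ \mA_{i+1}$ and using positive homogeneity gives $\vq_{i+1} = c_i\vq_{i+1}'$ for a new $c_i > 0$. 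At depth $L$ we then have $\vq_L = c_{L-1}\vq_L'$, and applying the final genuine $\nc_{\sf MV}$ to both, positive-scalar invariance yields $\nc_{\sf MV}(\vq_L) = \nc_{\sf MV}(\vq_L')$ --- the asserted equality of the two network functions.

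The one place demanding care --- and the step I expect to be the main obstacle --- is the non-degeneracy bookkeeping. The scalar $c(\vq_i)$ exists and is positive only when $\vq_i$ is not proportional to $\one$ (in particular $\vq_i \neq \zero$, for which $\sigma(\vq_i)=0$ and $\nc_{\sf MV}$ is undefined anyway); since $\vq_i$ and $\vq_i'$ differ only by a positive scalar, this is equivalent to the stated hypothesis that none of the hidden activations being normalized are proportional to $\one$. I would phrase that hypothesis so that it simultaneously makes the left-hand (layer-normalized) network well-defined and makes every identity in the induction meaningful; the reduced network is automatically well-defined away from its last layer, since it performs only shifts --- which are everywhere defined --- until the terminal $\nc_{\sf MV}$. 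As in Proposition~\ref{prop:last_layer_fn} and its formal counterpart, the precise statement would spell these conditions out layer by layer.
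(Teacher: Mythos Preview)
Your proposal is correct and follows exactly the approach the paper intends: the paper's own proof of Proposition~\ref{prop:reduce_LN} is a one-line pointer (``follows analogously from the proof of \Cref{prop:last_layer_fn_2}''), and your induction --- tracking pre-normalization activations $\vq_i, \vq_i'$ that differ only by a positive scalar, using $\nc_{\sf MV} = c(\cdot)\,\shift(\cdot)$ plus positive homogeneity of $\rho\circ\mA_i$ for $i\ge 2$, and finishing with scale-invariance of the terminal $\nc_{\sf MV}$ --- is precisely that analogous argument written out in full. Your non-degeneracy bookkeeping (requiring $\vq_i$ not proportional to $\one$) also matches the formal condition in the paper's Proposition~2$'$.
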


\blue{The formal statement of \Cref{prop:reduce_LN} can be found in \Cref{prop:reduce_LN_2}.}

\paragraph{Extension to ResNet.} So far we have talked about simple feed-forward networks (including MLP and CNN). We can extend our results of Prop.~\ref{prop:last_layer_fn} and Prop.~\ref{prop:reduce_LN} to other model architectures like ResNet \citep{he2016deep}, as long as the original model is scale equivariant. Note that ResNet is a composition of multiple blocks, where each block is:
\begin{align}
{\sf block}(\vx) = \rho(\vx + \mA_2 \circ \rho \circ \mA_1(\vx)),
\end{align}
where each of $\mA_1, \mA_2, \mA_3$ is a linear transformation. For any $\lambda > 0$, we would have ${\sf block}(\lambda \vx) = \lambda {\sf block}(\vx)$. If we add layer normalization to this block, it becomes:
\begin{align}
{\sf block}_{\sf LN}(\vx) = \nc_{\sf MV} \circ \rho(\vx + \mA_2 \circ \nc_{\sf MV} \circ \rho \circ \mA_1(\vx)).
\end{align}
A similar conclusion as Prop.~\ref{prop:reduce_LN} would be to replace the block above as:
\begin{align}\label{eq:block_LN}
{\sf block}'_{\sf LN}(\vx) = {\sf s} \circ \rho(\vx + \mA_2 \circ \nc_{\sf MV} \circ \rho \circ \mA_1(\vx)).
\end{align}
However, we cannot replace the second $\nc_{\sf MV}$ with mean shift, which causes a larger gap between LN and only normalizing the last layer feature as we will see in \Cref{tbl:cnn_resnet}. 

Our conclusions do not immediately extend to Vision Transformers \citep{dosovitskiy2020image}, since the multi-head attention is not scale equivariant due to the use of softmax.

\subsection{Expressive power}
The reduction of feature norm and layer norm allows us to analyze the expressive power of LN/FN networks. Denote $\varepsilon$, $\varepsilon_F$, $\varepsilon_L$ as the 0-1 classification error of a vanilla/FN/LN network respectively. We have:

\begin{restatable}[\textbf{expressive power}]{proposition}{expressNorm}
\label{prop:equiv_un_fn_0}
Given any model parameters $(\thetav, \mW)$ and any sample $(\vx, y)$ with $\feature(\vx) \neq \zero$, vanilla and FN networks have the same error on sample $(\vx, y)$, i.e., $\varepsilon(\thetav, \mW;\vx, y) = \varepsilon_F(\thetav, \mW; \vx, y)$. For any model parameter $(\thetav, \mW)$ of a layer normalized network, one can find $(\thetav', \mW')$ of a vanilla model such that
$
\varepsilon(\thetav', \mW';\vx, y) = \varepsilon_{L}(\thetav, \mW;\vx, y), \mbox{ for any }\vx, y.
$
\end{restatable}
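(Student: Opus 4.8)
The plan is to reduce the proposition to one elementary fact about the softmax classifier: on a sample $(\vx,y)$ the $0$--$1$ error depends on the logit vector $(z_1,\dots,z_C)$ only through $\argmax_{c\in[C]} z_c$ (ties resolved by any fixed rule that depends on the maximizing set alone, e.g.\ smallest index), and this is invariant under multiplying all the $z_c$ by one common strictly positive scalar. So for each claim it suffices to exhibit a vanilla network whose logits, at the same $\vx$, are a positive rescaling of the logits of the given FN / LN network.

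For the FN claim I would invoke \Cref{prop:last_layer_fn}. Under \Cref{assmp:scale_equiv}, if $\feature(\vx)=\va_L\neq\zero$ then every intermediate activation is non-zero too: a vanishing $\va_i$ (with $i\le L-1$) would force $\va_{i+1}=\rho(\mA_{i+1}(\va_i))=\rho(\zero)=\zero$ by the zero-bias / (Leaky)ReLU structure, hence $\va_L=\zero$. So \Cref{prop:last_layer_fn} applies and the FN feature extractor with parameters $\thetav$ computes $\nc(\feature(\vx))=\tfrac{\sqrt d}{\|\feature(\vx)\|}\feature(\vx)$; its $c$-th logit is $\vw_c^\top\nc(\feature(\vx))=\tfrac{\sqrt d}{\|\feature(\vx)\|}\,\vw_c^\top\feature(\vx)$, a strictly positive multiple of the vanilla logit $\vw_c^\top\feature(\vx)$ of $(\thetav,\mW)$ (the scalar is finite and positive precisely because $\feature(\vx)\neq\zero$). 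The argmaxes coincide, so $\varepsilon_F(\thetav,\mW;\vx,y)=\varepsilon(\thetav,\mW;\vx,y)$.

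For the LN claim the only extra step is to absorb the residual shift operators. By \Cref{prop:reduce_LN} the LN network equals $\nc_{\sf MV}\circ\rho\circ\mA_L\circ{\sf s}\circ\rho\circ\mA_{L-1}\circ\cdots\circ{\sf s}\circ\rho\circ\mA_1$ followed by $\mW$. Each ${\sf s}$ is the linear (symmetric) map $\Iv-\tfrac1d\one\one^\top$ at the relevant width, and the one between $\rho\circ\mA_i$ and $\mA_{i+1}$ folds into $\mA_{i+1}$; since $\mA_{i+1}$ has zero bias for $i+1\ge2$, the composite $\mA_{i+1}\circ{\sf s}$ is again zero-bias linear, so we obtain vanilla feature-embedding parameters $\thetav'$ — a deterministic function of $\thetav$, independent of the sample, still satisfying \Cref{assmp:scale_equiv} — so that the reduced LN feature extractor equals $\nc_{\sf MV}\circ\vg_{\thetav'}$. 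Using $\nc_{\sf MV}(\vz)=\sqrt d\,{\sf s}(\vz)/\|{\sf s}(\vz)\|$ and symmetry of ${\sf s}$, the $c$-th LN logit is $\vw_c^\top\nc_{\sf MV}(\vg_{\thetav'}(\vx))=\tfrac{\sqrt d}{\|{\sf s}(\vg_{\thetav'}(\vx))\|}\,({\sf s}\vw_c)^\top\vg_{\thetav'}(\vx)$. Hence with $\mW'=({\sf s}\vw_1,\dots,{\sf s}\vw_C)$ the vanilla network $(\thetav',\mW')$ has $c$-th logit $({\sf s}\vw_c)^\top\vg_{\thetav'}(\vx)$, a positive rescaling of the LN logit, so the argmaxes — hence the $0$--$1$ errors — agree for every $(\vx,y)$.

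The algebraic identities above are routine; the real care is in the degenerate configurations, which I expect to be the fussiest part of the write-up. For FN the bad case $\feature(\vx)=\zero$ is excluded by hypothesis. For LN one must handle $\vg_{\thetav'}(\vx)\propto\one$, where ${\sf s}$ annihilates the feature: then both the LN logits (read off the $\epsilon$-stabilized map $\vz\mapsto\vz/\max\{\epsilon,\|\vz\|\}$ if one wants the network everywhere defined) and the constructed vanilla logits $({\sf s}\vw_c)^\top\vg_{\thetav'}(\vx)$ are identically $\zero$, so they tie the same way and the errors still match; and the non-degeneracy hypotheses of \Cref{prop:last_layer_fn} and \Cref{prop:reduce_LN} are inherited by the same zero-bias propagation argument. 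Apart from this edge-case bookkeeping and making the scale-invariance of the tie-breaking convention explicit, the conceptual content is entirely the two reduction propositions plus the positive-scaling / symmetric-shift invariance of the classifier's decision.
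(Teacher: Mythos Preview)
Your proposal is correct and follows essentially the same route as the paper: both arguments reduce to the observation that the softmax argmax is invariant under a common positive rescaling of the logits, and for the LN part both absorb the shift operator ${\sf s}=\mI-\tfrac1d\one\one^\top$ into the adjacent linear layers (the paper takes $\mU'_i=\mU_i\mP_{i-1}$ and $\mW'=\mP_L\mW$, which is exactly your ${\sf s}\vw_c$). The only cosmetic difference is that you invoke \Cref{prop:last_layer_fn} and \Cref{prop:reduce_LN} explicitly as preprocessing, whereas the paper works directly from the last-layer formulation (its eq.~defining $\varepsilon_F$ already has $\nc$ applied only to $\feature(\vx)$) and re-derives the absorption by induction; your extra bookkeeping on the degenerate cases is slightly more careful than the paper's.
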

\noindent \blue{Let us explain the significance of this proposition. In this work we aim to understand the role of layer normalization in federated learning. There could be two reasons why LN is so helpful:
\begin{itemize}
\item Layer normalization is more powerful in that it can express better functions to fit the data.
\item Layer normalization allows the training process to be faster.
\end{itemize}
\Cref{prop:equiv_un_fn_0} rules out the first explanation.} It tells us that in principle, a vanilla network can learn any pattern that LN/FN networks generate. If an LN/FN network could reach $80\%$ accuracy, so can a vanilla network by adapting its parameters. \blue{Moreover, the classes of FN and vanilla networks are equivalent, but FN still outperforms vanilla networks in our settings. This rules out the explanation of model class restriction.} Therefore, the benefit of LN/FN we saw in \Cref{tbl:comp-sota} lies in the training process, which we will present in the next subsection. 

\blue{There still remains an important question regarding \Cref{prop:equiv_un_fn_0}: since vanilla NNs can express LN networks, is the reverse true? Or could LN networks express any ``meaningful'' NN functions that fit the data? This question is out of our scope and we leave it to future research. }

\begin{table}[t!]
    \caption{Comparing the difference between LN and FN for CNNs and ResNet. We observe a smaller accuracy gap for CNNs, as predicted by our theory.}
      \label{tbl:cnn_resnet}
  \centering
  \begin{tabular}{lccccccccc}
    \toprule
    \multirow{1}{*}{Methods}  & \multicolumn{1}{c}{1 class}  & \multicolumn{1}{c}{2 classes}  & \multicolumn{1}{c}{Dir(0.1)}       \\
   \midrule
    FedLN - CNN  &  78.04       & 77.52           &  78.27     \\  
    FedFN - CNN   & 77.14     & 76.67          & 78.13               \\
    \midrule
    FedLN - ResNet  & 86.15 & 88.01 & 89.06 \\
    FedFN - ResNet  & 80.81 & 82.30 & 84.19 \\
    \bottomrule
  \end{tabular}
\end{table}


\subsection{Normalization is essential for label shift}

Now that we have simplified normalized networks to the last-layer scaling, our next question is: what is the relation between last-layer scaling and label shift? In order to further illustrate this connection, we consider the most extreme one-class setting, where each client has samples from only one class (see \Cref{fig:label_shift}). We defer the exploration of more general label shift to future work. 

\vspace{0.3em}
\noindent \textbf{Label shift induces local overfitting.} 
 The scarcity of label variation on a client poses a risk of local overfitting, which we define below:

\begin{quote}In FL, \emph{local overfitting} describes the situation when a client model performs extremely well on its local dataset, but fails to generalize to other clients.
\end{quote}
\noindent  

\begin{figure*}[t!]
\centering \includegraphics[width=0.8\textwidth]{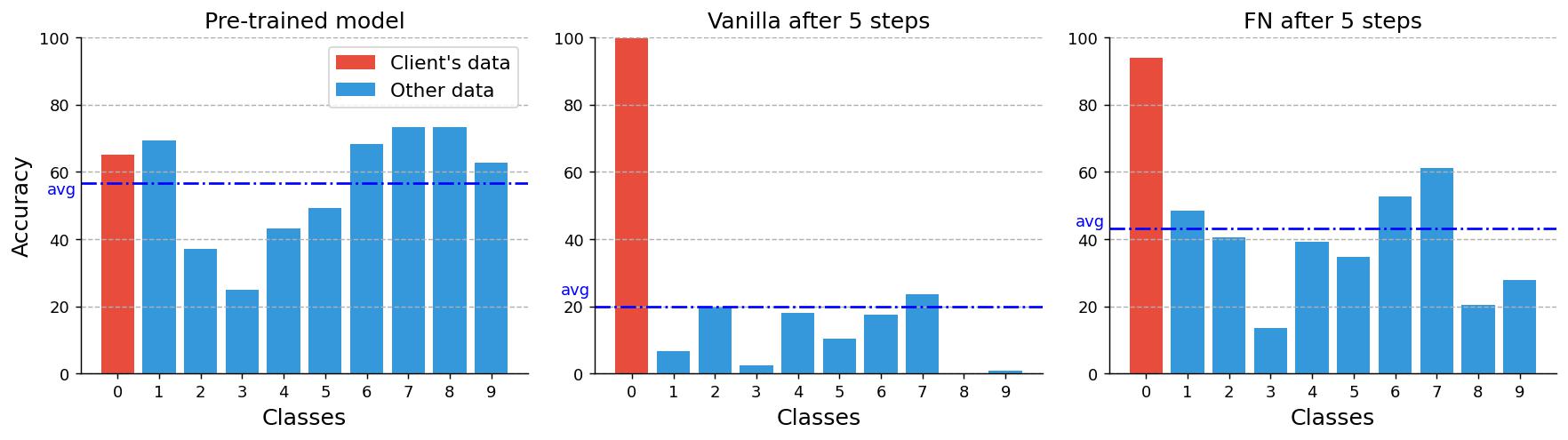}
\caption{Local overfitting in the one-class setting on CIFAR-10. The client only has examples from class 0. The blue lines show the average global performance. ({\bf left}) the test accuracies of the pre-trained model before local training; ({\bf middle}) after 5 steps of local training with a vanilla model; ({\bf right}) after 5 steps of local training with FN. Best viewed in color.}
\label{fig:forgetting}
\vspace{-0.5em}
\end{figure*}

We illustrate local overfitting in \Cref{fig:forgetting}, where the client dataset only has samples from class $0$. After local training, the vanilla model easily reaches $100\%$ on its local dataset, while the performance on other clients drastically drops. This is true even when it is initialized from a relatively good pre-trained model (left figure).
This resembles the well-known phenomenon of \emph{catastrophic forgetting} \citep{mccloskey1989catastrophic}. Comparably, feature normalization can mitigate local overfitting (\Cref{fig:forgetting}, right). 

Let us try to understand local overfitting more clearly. Given the dataset $S_k = \{(\vx_i, k)\}_{i=1}^{m_k}$ for client $k$, feature embedding $\feature$ and class weight vectors (class embeddings) $\vw_k$'s, the cross-entropy loss of client $k$, eq.~\ref{eq:per_client} becomes:
\begin{align}
f_k(\thetav, \mW) = \Eb_{(\vx, y)\in S_k} \Bigg[ \log{\sum_{c\in [C]} \exp((\vw_c - \vw_k)^\top \feature(\vx))} \Bigg], \nonumber
\end{align}
Minimizing this loss function drives $(\vw_{c} - \vw_k)^\top \feature(\vx_i) \to -\infty$ for all $\vx_i$ and $c\neq k$, leading to pathological behavior. For example, as long as $(\vw_c - \vw_k)^\top \feature(\vx_i) < 0$ for all $i$ and $c\neq k$, both training and test accuracies reach 100\%, which often occurs in our practice. We present a necessary condition to minimize $f_k(\thetav, \mW)$:
\begin{restatable}[\textbf{divergent norms}]{theorem}{divergent}\label{prop:norm_infty}
In order to minimize the one-class local loss $f_k(\thetav, \mW)$, we must have at least one of the following: 1) $\|\vw_c\| \to \infty$ for all $c\in [C]$ and $c\neq k$; 2) $\|\feature(\vx_i)\| \to \infty$ for all $\vx_i$; 3) $\|\vw_k\| \to \infty$.
\end{restatable}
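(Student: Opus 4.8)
The plan is to argue by contraposition: I will show that if all three norm-divergence conditions fail—i.e., $\|\vw_k\|$ stays bounded, and either some $\|\vw_c\|$ with $c\neq k$ stays bounded along a subsequence, or some $\|\feature(\vx_i)\|$ stays bounded—then $f_k(\thetav,\mW)$ cannot be driven to its infimum. Recall that in the one-class setting, up to the additive constant $\log C$ that is attained only in the limit, we have
\begin{align}
f_k(\thetav,\mW) = \Eb_{(\vx,k)\in S_k}\Big[\log\big(1 + \sum_{c\neq k}\exp\big((\vw_c-\vw_k)^\top\feature(\vx)\big)\big)\Big] \ge 0, \nonumber
\end{align}
and the infimum $0$ is approached only if $(\vw_c-\vw_k)^\top\feature(\vx_i)\to-\infty$ for every $i$ and every $c\neq k$. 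So the whole argument reduces to: the inner products $(\vw_c-\vw_k)^\top\feature(\vx_i)$ can all tend to $-\infty$ only if one of conditions (1)–(3) holds.

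The key steps, in order, are as follows. First, fix $c\neq k$ and an index $i$; by Cauchy–Schwarz, $|(\vw_c-\vw_k)^\top\feature(\vx_i)| \le (\|\vw_c\|+\|\vw_k\|)\|\feature(\vx_i)\|$. Hence if $\|\vw_k\|$, $\|\vw_c\|$, and $\|\feature(\vx_i)\|$ were all bounded along a common subsequence, then $(\vw_c-\vw_k)^\top\feature(\vx_i)$ would be bounded along that subsequence, so it cannot tend to $-\infty$, and along that subsequence $f_k$ stays bounded away from $0$. Second, I assemble this into the stated trichotomy: suppose for contradiction that the negations of all of (1), (2), (3) hold simultaneously. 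The negation of (3) gives a subsequence on which $\|\vw_k\|$ is bounded. The negation of (1) says it is \emph{not} the case that $\|\vw_c\|\to\infty$ for all $c\neq k$, so there is some fixed $c_0\neq k$ and a further subsequence on which $\|\vw_{c_0}\|$ is bounded. The negation of (2) says it is not the case that $\|\feature(\vx_i)\|\to\infty$ for all $i$, so there is some fixed index $i_0$ and a further subsequence on which $\|\feature(\vx_{i_0})\|$ is bounded. Passing to this common subsequence, $(\vw_{c_0}-\vw_k)^\top\feature(\vx_{i_0})$ is bounded by step one, so the corresponding term $\exp\big((\vw_{c_0}-\vw_k)^\top\feature(\vx_{i_0})\big)$ is bounded below by a positive constant, the summand for $\vx_{i_0}$ in $f_k$ is bounded below by a positive constant, and therefore $f_k$ does not converge to its infimum along any sequence containing this subsequence—contradicting that the parameters are a minimizing sequence. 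This yields the claim.

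I should be a little careful about one subtlety, which I expect to be the main (minor) obstacle: the statement "in order to minimize $f_k$" refers to a minimizing \emph{sequence} of parameters (the infimum is not attained), so all the divergences must be phrased as "along the minimizing sequence," and the subsequence-extraction argument above must be stated so that it applies to \emph{every} subsequence of the minimizing sequence—equivalently, the negation of each condition is "there exists a subsequence along which the norm stays bounded." The logical bookkeeping of nested subsequences (first for $\|\vw_k\|$, then $\|\vw_{c_0}\|$, then $\|\feature(\vx_{i_0})\|$) is the part that needs to be written precisely; the analysis itself is just Cauchy–Schwarz plus monotonicity of $\exp$ and $\log$. Everything else is routine, and no appeal beyond Assumption~\ref{assmp:scale_equiv} or earlier propositions is needed.
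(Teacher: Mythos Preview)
Your proposal is correct and follows essentially the same approach as the paper: negate all three conditions, use Cauchy--Schwarz to bound the relevant inner product $(\vw_{c_0}-\vw_k)^\top\feature(\vx_{i_0})$, and conclude that the per-sample loss at $\vx_{i_0}$ is bounded away from zero, contradicting that the sequence is minimizing. You are in fact more careful than the paper about the nested-subsequence bookkeeping; the paper simply writes ``$\|\vw_k\|$ is bounded, $\|\vw_c\|$ is bounded for some $c\neq k$, and $\|\feature(\vx_i)\|$ is bounded for some $\vx_i$'' and concludes directly.

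Two small remarks. First, the phrase ``up to the additive constant $\log C$ that is attained only in the limit'' is confusing and should be dropped---the infimum of $f_k$ is $0$, and your displayed formula already reflects this correctly. Second, the paper includes a short preliminary step you omit: it explicitly constructs parameters (constant feature, $\vw_k=\rho(x_0)\one$, $\vw_c=-\rho(x_0)\one$) and scales $\mW\mapsto t\mW$ to show that $\inf f_k=0$ is actually approachable. You take this for granted; it is routine, but worth one sentence so the phrase ``minimizing sequence'' is grounded.
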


\Cref{prop:norm_infty} tells us that either some class embedding norms or all feature embedding norms must diverge. \blue{Thus it points out the importance of controlling the feature/class embedding norms. If we do not add any feature/layer normalization, minimizing the label skewed local dataset could result in divergent norms. }

\begin{figure*}[!t]
\centering
\begin{subfigure}{1\textwidth}
\includegraphics[width=\textwidth]{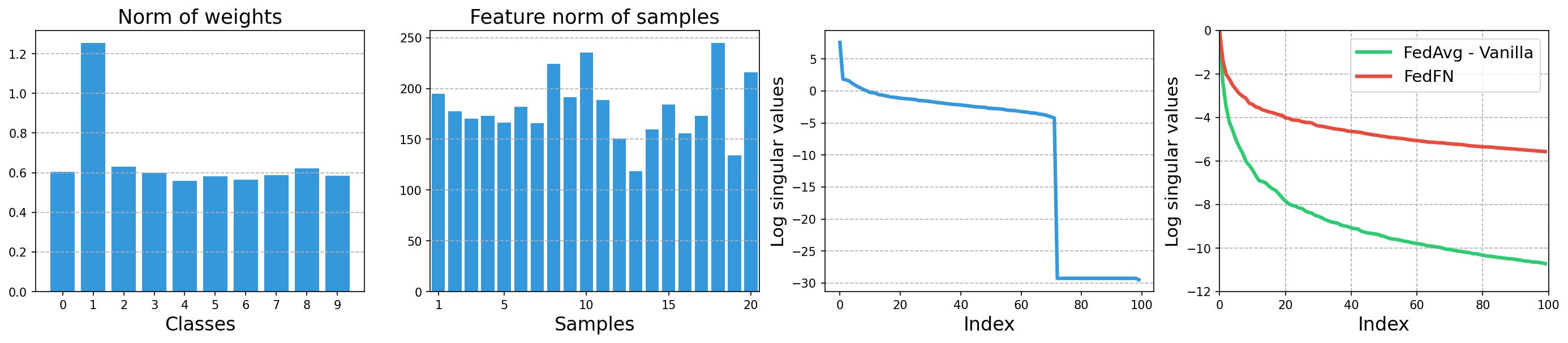}
\label{fig:singular_values_2}
\end{subfigure}
\vspace*{-5mm}
\caption{Local training with only samples from one class. \textbf{(left)}: vector norms of each class embedding. \textbf{(middle left)}: the norms of different feature vectors. We randomly choose 20 images from the dataset. \textbf{(middle right)}: singular values of features of a local overfitted model; \textbf{(right)}: singular values of normalized features learned from FedAvg and FedFN. 
}
\label{fig:one_class_exp}
\vspace{-0.7em}
\end{figure*}

\paragraph{Which norms diverge?} In \Cref{fig:one_class_exp}, we perform local training on a client with samples from only one class of the standard CIFAR-10 dataset. In the left panel, we calculate the norms of each $\vw_c$ with $c\in [10]$. It can be shown that $\|\vw_1\|$ is relatively large compared to others. In the middle-left panel, we sample $20$ images from client $1$ and compute their feature norms. We see that $\|\feature\|$'s are all very large, compared to class embeddings. \blue{Combining with \Cref{prop:norm_infty} and \Cref{fig:one_class_exp}, we can argue that controlling the divergence feature norms is more important, which emphasizes the importance of LN/FN in label-skewed FL. }  

\vspace{0.3em}
\noindent \textbf{Feature collapse explains local overfitting.} 
If we plot the singular values of the feature matrix of 20 random images from different classes,
$
[\feature(\vx_1), \dots, \feature(\vx_{20})],
$
we observe a huge spectral gap. While the first singular value $\sigma_1 \approx 1.9\times 10^3$, the second singular value is $\sigma_2 \approx 6.9$, which implies that the feature embeddings are approximately in a {one-dimensional subspace}. This is known as \emph{feature collapse} (c.f.~\citealt{shi2023towards}), as the feature embeddings of different classes are mapped to the same directions. This explains the local overfitting of one-class datasets. 

\vspace{0.3em}
\noindent \textbf{Normalization addresses feature collapse.}
In fact, in the one-class setting, there is no need for the feature embedding to distinguish images from different classes, especially when there is not much variation in the local dataset (such as the one-class setting). For example, the embedding can simply increase the feature norms without changing the directions.

In contrast, if the feature norms are constrained (like in LN/FN), then each client cannot overfit by increasing the feature norms, but has to learn the directional information of the feature embeddings. This accelerates the training of feature embeddings under heavy label shift.

We verify this claim on the right of \Cref{fig:one_class_exp}. While a vanilla network learns degenerate features, an FN network mitigates feature collapse and the feature embedding has more directions that are effective, measured by the corresponding singular values (c.f.~\citealt{shi2023towards}). 

\blue{\noindent \textbf{Communication cost and privacy.} Adding FN or LN does not increase the number of parameters of the network, and only cause minor additional computation at each client. Therefore, vanilla FedAvg and FedFN have the same communication cost for sending models to a server at \textit{each communication round}. Further, as implied in Table 1, due to faster convergence, FedFN requires less communication rounds to reach a pre-defined accuracy, and therefore, FedFN has a better total communication cost. 

Unlike BN which requires the record of the running statistics of data during training, LN and FN do not record statistics of batches. Instead, in LN and FN, the normalization is done independently for each sample both in training and inference. Therefore, adding FN and LN to an FL system is unlikely to degrade privacy.

}




\section{Experimental Analysis}

We conduct extensive experiments to further analyze the relation between LN/FN and label-shifted federated learning. 

\begin{figure*}[t!]
\centering
\includegraphics[width=0.5\textwidth]{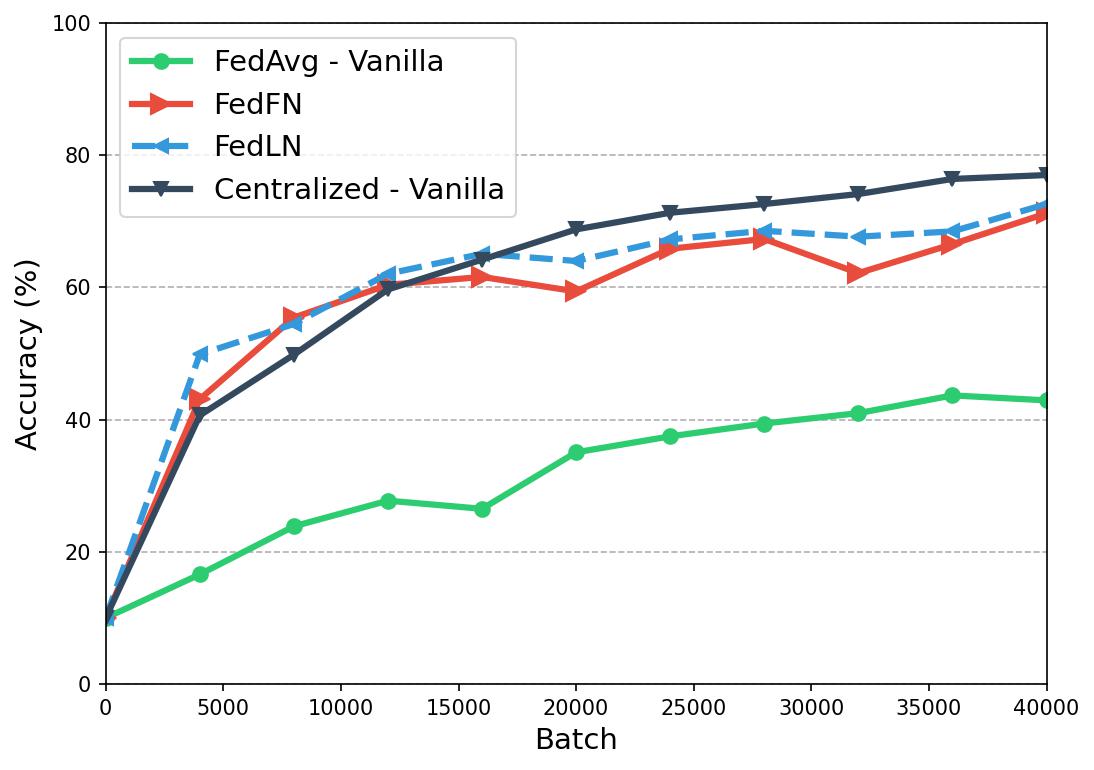}
\vspace{-0.5em}
\caption{Comparing test accuracies of centralized training with FedAvg of models with different normalizations in one-class label shift in CIFAR-10. The $x$-axis denotes how much data is fed into an algorithm, measured by batches.}
\label{fig:fedavg_vs_methods}
\end{figure*}

\subsection{FN is the essential mechanism of LN}

In \Cref{fig:fedavg_vs_methods}, we compare different FedAvg algorithms with centralized training in the one-class setting of CIFAR-10. We apply different normalization to a CNN architecture. 

If we apply vanilla unnormalized networks to FedAvg, the average test accuracy grows very slowly compared to centralized training, although the same amount of data is passed to both algorithms. Comparably, the convergence of FedAvg with LN and FN networks is much faster and closer to centralized training. Moreover, the performance of FedFN/FedLN are very similar to each other, confirming that FN is the main mechanism of LN under extreme label shift.

\begin{table}[t!]
  \caption{Comparing layer-wise vs. last-layer normalization. $\nc^L$ and $\nc\vanilla^{L-1}$ denote layer-wise and last-layer scaling respectively; they perform similarly as predicted by Prop.~\ref{prop:last_layer_fn}. $\nc_{\sf MV}^L$ and $\nc_{\sf MV}{\sf s}^{L-1}$ denote layer-wise and last-layer MV normalization respectively; Prop.~\ref{prop:reduce_LN} predicts they perform similarly. \blue{The table shows that the assumption on bias terms as required by \Cref{assmp:scale_equiv} does not affect the performance. Our setup includes 10 clients.}}
  \label{tbl:mean_shift-summary}
  \centering
  \begin{tabular}{lccccccc}
    \toprule
    \multirow{ 2}{*}{Methods} & \multicolumn{3}{c}{without bias} & \multicolumn{3}{c}{with bias} \\
           & \multicolumn{1}{c}{1 class}  & \multicolumn{1}{c}{2 classes}  & \multicolumn{1}{c}{Dir(0.1)}    & \multicolumn{1}{c}{1 class}  & \multicolumn{1}{c}{2 classes}  & \multicolumn{1}{c}{Dir(0.1)}                     \\
    \midrule
    FedAvg                          & 56.82               & 71.83                   & 73.06     & 58.57 & 72.0    & 73.17  \\
    \midrule
    FedFN - $\nc^{L}$                   & 77.05            & 76.60                & 77.27   & 77.70 &76.29 &76.50\\
    FedFN - $\nc\vanilla^{L-1}$         & 77.14       & 76.67            & 78.14      &74.46  &76.27 &76.78\\
       \midrule  
    FedLN - $\nc_{\sf MV}^{L}$              & 77.16            & \bf 78.09             & 78.71  &\bf77.75  &76.73  &\bf78.36      \\ 
    FedLN - $\nc_{\sf MV}{\sf s}^{L-1}$           & \bf77.71       & 77.65           & \bf79.94   &76.89   &\bf76.74   &77.26\\
    \bottomrule
  \end{tabular}
\end{table}

We also verify Prop.~\ref{prop:last_layer_fn} and Prop.~\ref{prop:reduce_LN} by running FN/LN before and after simplification. To satisfy \Cref{assmp:scale_equiv}, we removed the bias terms in the neural net and kept the ReLU activation. \Cref{tbl:mean_shift-summary} verifies that up to statistical error, normalization methods can be simplified to last-layer normalization. Even more, our simplification helps slightly as the last-layer normalization avoids division in the middle. In all cases, the performance of FN is very close to LN.

We also compare the performance gap between FN and LN with different architectures, including CNN and ResNet. From \Cref{tbl:cnn_resnet} we find that the gap with CNN is smaller than that of ResNet. This agrees with our discussion following eq.~\ref{eq:block_LN}: the larger gap is due to the LN inside each block that cannot be absorbed into FN. We present additional experimental results regarding this point in the appendices.

\begin{figure}[t!]
\includegraphics[width=0.5\textwidth]{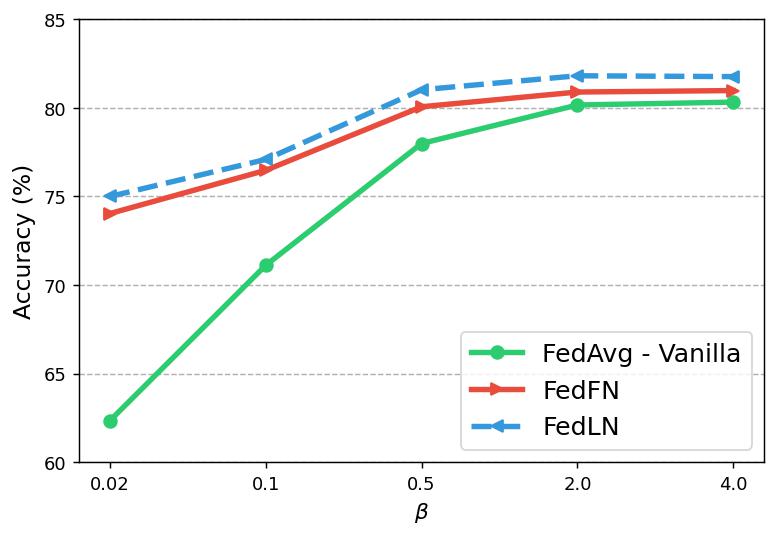}
\vspace{-1.0em}
\centering
\caption{Effect of data heterogeneity on the performance of normalization. $\beta$ represents the parameter in the Dirichlet distribution that is used to sample client label distributions.}
\label{fig:iid_sweep}
\vspace{-1.0em}
\end{figure} 

\subsection{Does normalization always help?}

One might argue that layer norm is just advantageous in general and it has no relation with label shift. We show that is not the case. In \Cref{fig:iid_sweep}, we compare FedFN/FedLN with FedAvg with different levels of label skewness. Here $\beta$ controls the level of label skewness. When $\beta$ is small, the label shift is heavy and there is a clear gap between FedFN/FedLN and FedAvg. However, when the clients are more i.i.d., the performance gap diminishes. This reveals the strong connection between LN/FN and label shift. 


\subsection{Do other normalization methods help?}

We compare FN/LN with other candidates of normalization, including group norm \citep[GN,][]{wu2018group} and batch norm \citep[BN,][]{ioffe2015batch}. 

\begin{table}[!t]
  \caption{Comparison among normalization methods in FL. FedAvg + BN means average all the parameters after using batch normalization. FedBN is from \citet{li2020fedbn}. \blue{We use 10 clients with CIFAR-10 dataset.}}
  \label{tbl:before_after}
  \centering
    \begin{tabular}{lccccccccc}
    \toprule
    \multirow{1}{*}{Methods}  & \multicolumn{1}{c}{1 class}  & \multicolumn{1}{c}{2 classes}  & \multicolumn{1}{c}{Dir(0.1)}                        \\
   \midrule
     FedAvg                          & 56.82               & 71.83                   & 73.06           \\
     \midrule
    FedGN          & 77.15        & 77.54         & 79.15        \\
    
    FedAvg + BN        & 9.63            & 24.64        & 58.80         \\
    FedBN   &  10.00   & 19.90  & 41.25 \\
    \midrule
    FedFN  & 77.14       & 76.67            & 78.14     \\  
    FedLN       & \bf77.71       & \bf 77.65           & \bf79.94               \\
    
    \bottomrule
  \end{tabular}
\end{table}

\noindent \Cref{tbl:before_after} shows the comparison in the one-class FL setting. For FedFN/FedLN we use Prop.~\ref{prop:last_layer_fn} and Prop.~\ref{prop:reduce_LN} to simplify the normalization layers. We compare FedFN/FedLN with FedGN (FedAvg + GN, $\mathtt{group\_number} $=2) and BN. Note that there are two ways to apply batch normalization: one is simply adding BN as we did for FN/LN/GN, another is to avoid averaging the running mean/variance of the BN layers and keep them local, as suggested by \citet{li2020fedbn}.

As argued in \citet{du2022rethinking, hsieh2020non, li2020fedbn, wang2023batch}, plainly adding BN does not improve FedAvg. Our \Cref{tbl:before_after} verifies this conclusion and shows that FedAvg + BN could even worsen the performance, especially under heavy label shift. Moreover, each after personalizing the BN layers of each client \citep{li2020fedbn}, FedBN does not help mitigate label shift either. One explanation is that FedBN was designed for covariate shift problems, while our main setting is label shift. 

On the other hand, FedGN behaves similarly to FedLN and FedFN. This is expected since group norm can be treated as a slight generalization of layer norm: it splits the hidden vector into subgroups and MV normalizes each group. If the group number is one, then GN reduces to LN. 

\begin{figure}[t!]
\centering
\includegraphics[width=0.5\textwidth]{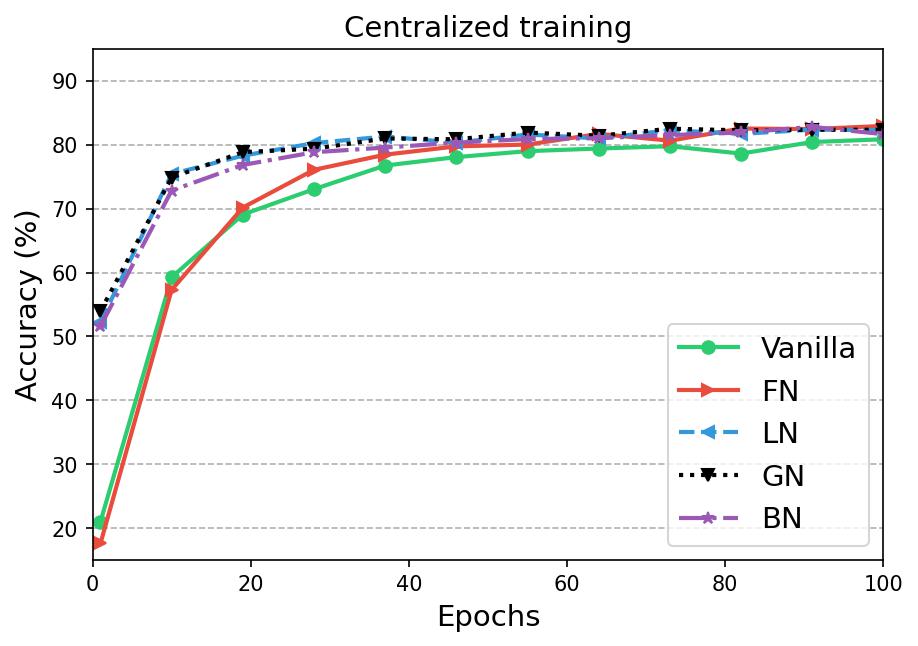}
\caption{Comparison of different normalization methods under the centralized setting. }
\vspace{-1.0em}
\label{fig:centralized}
\end{figure}

However, the situation differs in the centralized setting, as we see in \Cref{fig:centralized}. When we aggregate all the data on the server and perform different types of normalization, we find that BN/LN/GN improves centralized training similarly, as observed in recent literature \citep{wu2018group}. In contrast, the performance gain of FN disappears. This again verifies the strong connection between FN and label shift. 

\subsection{Label shift under covariate shift}

In real scenarios, we may also face other challenges than label shift. For example, covariate shift may also be a great challenge in FL which occurs when the input distributions differ. We test different normalization methods under covariate shift using a classic dataset called PACS \citep{li2017deeper}, in addition to our main problem, label shift. The results can be seen in \Cref{tbl:pacs-resnet-summary}. From the table we conclude that FedFN/FedLN still has a clear edge under covariate shift.

\begin{table}[t!] 
 \centering
  \caption{PACS dataset comparison. 2-2-3 classes means we split each domain into 3 clients, with 2 classes, 2 classes and 3 classes respectively.}
  \label{tbl:pacs-resnet-summary}
\begin{tabular}{lcccccc}
    \toprule
    Methods   & 2-2-3 classes      & Dir(0.5)       & Dir(1.0)  \\
    \midrule
    FedAvg    & 54.35          & 67.60           & 68.15          \\
    FedProx   & 55.96         & 57.14           & 56.58          \\
    FedYogi   & 64.56          & \un{71.81}          & \un{72.20}         \\
    \midrule
    FedFN        & \bf{71.58}       & \bf{72.72}     & {71.89}      \\
    FedLN        & \un{70.46}   & 69.39   & \bf{72.72}    \\
    \bottomrule
  \end{tabular}
\end{table}

\begin{figure}[t!]
    \centering
    \includegraphics[width=0.5\textwidth]{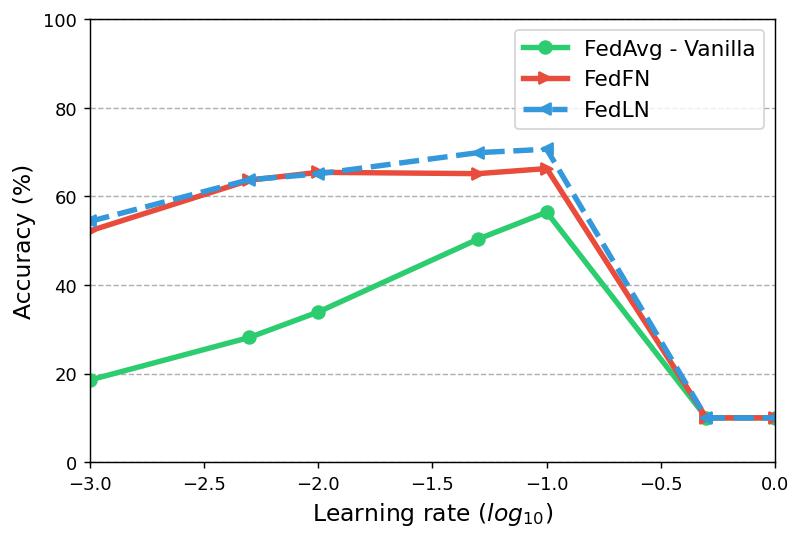}
    \caption{Effect of learning rate on the performance of FN, LN and vanilla networks in one-class distribution setting.}
    \label{fig:fedavg-hparam-norm}
    \vspace{-0.5em}
\end{figure}

\subsection{Learning rate robustness}

In \Cref{fig:fedavg-hparam-norm}, we show the performance of FN, LN, and vanilla methods in the one-class setting of CIFAR-10. With different learning rates, the performance of the vanilla method changes severely, while the variation of FN/LN is relatively small within a large range of learning rates. This shows the ease of hyperparameter tuning with normalization. 



\begin{table}[!t]\
  \caption{Comparison among different variations of FedLN. }
  \label{tbl:variation_LN}
  \centering
  \begin{tabular}{lccccccccc}
    \toprule
    \multirow{1}{*}{Methods}  & \multicolumn{1}{c}{1 class}  & \multicolumn{1}{c}{2 classes}  & \multicolumn{1}{c}{Dir(0.1)}                        \\
   \midrule
    FedLN - learnable  &  \bf 78.04       & 77.52           &  \bf 78.27     \\      
    FedLN - static   & 77.17       & \bf 78.41          & \bf 78.27               \\
    FedLN - before  & 77.04 & 78.08 & \bf 78.27 \\
    \bottomrule
  \end{tabular}
\end{table}
\subsection{Variation of layer normalization}

In the implementation of LN, we have in fact implemented learnable parameters (FedLN - learnable), by adding an element-wise affine transformation to the vector: 
$$
{\bm \gamma} \odot \frac{\vx - \mu(\vx) \one}{\sigma(\vx)} + {\bm \beta}.
$$
If we fix ${\bm \gamma} = \one$ and ${\bm \beta} = \zero$, then it reduces to a usual MV normalization, which we call the \emph{static} setting. We compare both methods in \Cref{tbl:variation_LN}, where we also add using LN before the activation (FedLN - before). Such variation does not make a noticeable difference in our setting.

\section{Conclusion}

In this work, we reveal the profound connection between layer normalization and the label shift problem in federated learning. We take a first step towards understanding this connection, by identifying the key mechanism of LN as feature normalization. FN simply normalizes the last layer with a scale, but is extremely helpful in addressing feature collapse and local overfitting. Under various heavy label shift settings, the advantage of LN/FN is significant compared to other state-of-the-art algorithms. Some future directions include understanding this connection with more solid theory and testing our finding with a wider range of modalities.


\bibliography{tmlr}
\bibliographystyle{tmlr}

\appendix

\section{Limitations}

There are several limitations of our work. First, we did not try to propose a new algorithm, but rather to analyze and understand existing normalization methods. The main contribution is to unveil the connection between normalization and label shift in FL, and to understand this connection. 

Moreover, we did not provide a rigorous proof on the convergence rate, since the loss landscape of neural networks is non-convex and it is a longstanding problem in deep learning to rigorously analyze the optimization in usual settings. Rather, we take an alternative approach using feature collapse and local overfitting to qualitatively explain the learning procedures. A more solid foundation for the optimization of normalized networks is an interesting future direction. 

On the empirical side, we did not run our experiments several times for the reproducibility check, due to the expensive computation in FL tasks with heavy label shift. On our V100 GPUs, it takes a few days to finish one experiment with 10,000 global rounds and 10 local steps. However, the trend is consistent that normalization methods are much faster in heavy label shift problems, from our experiments on multiple datasets and hyperparameter choices. Last but not least, we have only tested our methods in vision datasets. It would also be interesting to test normalization on other data formats, such as natural language, molecule structure, etc.

\section{Detailed Theorems and Proofs}

We first verify the necessity of \Cref{assmp:scale_equiv} here for the scale equivariance of the embedding from the second layer. The necessity of the bias terms should be straightforward. For the activation function, we observe the following lemma:

\begin{restatable}[scale equivariant activation]{lemma}{scaleEquiv}\label{lem:scale_equiv_activation}
If $\rho: \Rb \to \Rb$, then $\rho(\lambda t) = \lambda \rho(t)$ for any $\lambda > 0$ and $t\in \Rb$ iff $\rho$ is a piecewise linear function, with the form:
\begin{align}\label{eq:two_pieces}
\rho(t) = \begin{cases}
a t & t > 0, \\
b t & t \leq 0,
\end{cases}\mbox{, where $a$ and $b$ are two real numbers.}
\end{align}
\end{restatable}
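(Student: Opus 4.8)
\textbf{Proof plan for Lemma~\ref{lem:scale_equiv_activation}.}

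The plan is to prove both directions of the iff. The ``if'' direction is the easy one: given $\rho$ of the two-piece form in \eqref{eq:two_pieces}, I would just check directly that for $\lambda > 0$, if $t > 0$ then $\lambda t > 0$ so $\rho(\lambda t) = a \lambda t = \lambda (at) = \lambda \rho(t)$, and similarly if $t \le 0$ then $\lambda t \le 0$ so $\rho(\lambda t) = b \lambda t = \lambda \rho(t)$; the sign of $t$ is preserved by multiplication by a positive scalar, so the casework goes through cleanly. (Leaky ReLU is the case $a = 1$, $b \in (0,1)$, and ReLU is $a=1$, $b=0$, which is why Assumption~\ref{assmp:scale_equiv} is phrased with (Leaky) ReLU.)

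For the ``only if'' direction, I would assume $\rho(\lambda t) = \lambda \rho(t)$ for all $\lambda > 0$ and all $t \in \Rb$. First set $t = 1$: then $\rho(\lambda) = \lambda \rho(1)$ for all $\lambda > 0$, so defining $a := \rho(1)$ we get $\rho(t) = a t$ for all $t > 0$. Next set $t = -1$: then $\rho(-\lambda) = \lambda \rho(-1)$ for all $\lambda > 0$, so defining $b := -\rho(-1)$ (equivalently $b := \rho(-1)/(-1)$) we get $\rho(t) = b t$ for all $t < 0$. Finally, for $t = 0$: taking any $\lambda > 0$, $\rho(0) = \rho(\lambda \cdot 0) = \lambda \rho(0)$, and since this holds for, say, $\lambda = 2$, we get $\rho(0) = 0 = b \cdot 0$, consistent with absorbing $t=0$ into the $t \le 0$ branch. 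This yields exactly the form \eqref{eq:two_pieces}.

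There is essentially no hard step here — it is a one-variable functional-equation argument where fixing $t = \pm 1$ immediately pins down the function on each half-line. The only mild subtlety to state carefully is that positivity of $\lambda$ is what makes $\lambda t$ stay on the same side of $0$ as $t$, so the two linear pieces do not interact; if negative $\lambda$ were allowed, one would be forced into $a = b$ (a single linear function), which is why the claim is about positive scalings only. I would present the ``only if'' direction first via the substitutions $t = 1, t = -1, t = 0$, then close with the trivial verification of the converse.
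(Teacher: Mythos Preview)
Your proposal is correct and follows essentially the same approach as the paper: substitute $t=1$, $t=-1$, and $t=0$ (with $\lambda=2$) to pin down $a$, $b$, and $\rho(0)=0$, then verify the converse by a sign-preservation argument. Your treatment is slightly more careful in defining $b := -\rho(-1)$ so that $\rho(t)=bt$ holds on $t\le 0$, whereas the paper writes ``$b=\rho(-1)$'' (a minor slip in sign), but the underlying argument is identical.
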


Constants $a, b$ can in fact be absorbed into the adjacent layers, allowing Leaky ReLU networks to express all neural networks with activation functions like eq.~\ref{eq:two_pieces}.

\begin{proof}
Taking $t = 1$, we have $\rho(\lambda) = \lambda \rho(1)$ for $\lambda > 0$. Similarly, $\rho(-\lambda) = \rho(-1)\lambda$. Therefore, $a = \rho(1)$ and $b = \rho(-1)$ are necessary. It remains to prove that $\rho(0) = 0$, which can be obtained with $\lambda = 2$, $t = 0$ in $\rho(\lambda t) = \lambda \rho(t)$. It is also easy to verify that eq.~\ref{eq:two_pieces} suffices.
\end{proof}

\lastLayer*

\newenvironment{propp}[1]{
  \renewcommand\thepropalt{#1}
  \propalt
}{\endpropalt}

Before we start the proof, we first provide a more formal description of the proposition above. 

\begin{propp}{1'}[\textbf{reduced feature normalization}]\label{prop:last_layer_fn_2}
Scale normalizing each layer is equivariant to only scale normalizing the last layer. More formally, suppose $\va_0 = \vx$ and $\feature(\vx)$ is computed from 
\begin{align}\label{eq:FN_each_layer_2}
\va_i = \nc\circ \rho(\mU_i \va_{i-1} + \vb_i), \mU_i \in \Rb^{d_i \times d_{i-1}}\mbox{ for }i\in [L], \vb_1 \in \Rb^{d_1}, \vb_i = \zero, \mbox{ for }i\in [2, L].
\end{align}
If $\rho(\mU_1 \va_{0} + \vb_1) \neq \zero$ and $\rho(\mU_i \va_{i-1}) \neq \zero$ for any $i = 2, \dots, L$, then
$\feature(\vx)$ is the same as $\vg'_\thetav(\vx)$, with:
\begin{align}
\va'_L = \vg'_\thetav(\vx) := \nc \circ \rho(\mU_L \va'_{L-1} + \vb_L), \, \va'_i = \rho(\mU_i \va'_{i-1} + \vb_i), \mbox{ for }i\in [L-1],
\end{align}
$\vb_i = \zero$ for $i = 2, \dots, L$ and $\va'_0 = \vx$. 
\end{propp}

\begin{proof}
For $L = 1$ it is easy to verify. If $L \geq 2$, we show that there exists $\lambda_1 > 0, \dots, \lambda_i > 0$ such that $\va_i = \lambda_i \va'_i$ for $1 \leq i < L$. For $i = 1$, we have $\va_1 = \va'_1 / \|\va'_1\| = \lambda_1 \va'_1$ with $\lambda_1 = 1 / \|\va'_1\|$. Suppose for $1\leq i < L - 1$, $\va_i = \lambda_i \va'_i$ holds. Then 
\begin{align}
\va_{i + 1} &= \nc \circ \rho(\mU_{i+1} \va_i) = \frac{\rho(\mU_{i+1} \va_i)}{\|\rho(\mU_{i+1} \va_i)\|} = \frac{\rho(\mU_{i+1} \lambda_i \va'_i)}{\|\rho(\mU_{i+1} \lambda_i \va'_i)\|} = \frac{\rho(\mU_{i+1} \va'_i)}{\|\rho(\mU_{i+1} \va'_i)\|} = \lambda_{i+1}\va'_{i+1},
\end{align}
where $\lambda_{i+1} = 1/\|\va'_{i+1}\|$. Therefore, from $\va_{L-1} = \lambda_{L-1} \va'_{L - 1}$ we obtain:
\begin{align}
\va_{L} = \nc \circ \rho(\mU_L \va_{L-1}) = \frac{\rho(\mU_L \va_{L-1})}{\|\rho(\mU_L \va_{L-1})\|} = \frac{\rho(\mU_L \va'_{L-1})}{\|\rho(\mU_L \va'_{L-1})\|} = \nc \circ \rho(\mU_L \va'_{L-1}) = \va'_L.
\end{align}
\end{proof}

\reducedLN*

We similarly provide a more formal description.

\begin{propp}{2'}[\textbf{reduced layer normalization}]\label{prop:reduce_LN_2}
Suppose our layer-normalized neural network is:
\begin{align}\label{eq:LN_each_layer}
\va_i = \nc_{\sf MV} \circ \rho(\mU_i \va_{i-1} + \vb_i), \mU_i \in \Rb^{d_i \times d_{i-1}}\mbox{ for }i\in [L], \vb_1 \in \Rb^{d_1}, \vb_i = \zero, \mbox{ for }i\in [2, L].
\end{align}
with $\va_0 = \vx, \va_L = \vg_\thetav(\vx)$, and $\rho(\mU_i \va_{i-1} + \vb_i) \neq \mu(\rho(\mU_i \va_{i-1} + \vb_i)) \one$ for $i \in [L]$. Then $\feature(\vx)$ as computed from eq.~\ref{eq:LN_each_layer}, is the same as $\vg'_\thetav(\vx)$, computed from:
\begin{align}
\va'_L = \vg'_\thetav(\vx) := \nc_{\sf MV} \circ \rho(\mU_L \va'_{L-1} + \vb_L), \, \va'_i = {\sf s} \circ \rho(\mU_i \va'_{i-1} + \vb_i), \mbox{ for }i\in [L-1],
\end{align}
$\vb_i = \zero$ for $i = 2, \dots, L$ and $\va'_0 = \vx$.  
\end{propp}

\begin{proof}
The proof follows analogously from the proof of \Cref{prop:last_layer_fn_2}.
\end{proof}

\subsection{Expressive power of normalized networks}

\newenvironment{theoremp}[1]{
  \renewcommand\thetheoremalt{#1}
  \theoremalt
}{\endtheoremalt}

In this subsection, we provide the proof steps for \Cref{prop:equiv_un_fn_0}. First, we recall:
\begin{align}\label{eq:indicator_softmax_2}
\begin{split}
&\varepsilon(\thetav, \mW; \vx, y) = \chi(y\notin {\rm argmax}_i \,\hat{y}_i), \, \mbox{ with }\hat{\vy} = {\rm softmax}(\mW \feature(\vx)), \\
&\varepsilon_F(\thetav, \mW; \vx, y) = \chi(y\notin {\rm argmax}_i \,\hat{y}_i), \, \mbox{ with }\hat{\vy} = {\rm softmax}(\mW \nc\circ \feature(\vx)),
\end{split}
\end{align}
where $\chi$ is the indicator function.
We prove the first part of \Cref{prop:equiv_un_fn_0} for feature normalized networks:

\begin{propp}{3.A}[expressive power of FN]
\label{prop:equiv_un_fn}
For any model parameters $(\thetav, \mW)$ and any sample $(\vx, y)$ with $\feature(\vx) \neq \zero$, we have $\varepsilon(\thetav, \mW;\vx, y) = \varepsilon_F(\thetav, \mW; \vx, y).$
\end{propp}


\begin{proof}
We first show:
\begin{align}
\varepsilon(\thetav, W;\vx, y) = 0 \Longrightarrow \varepsilon_F(\thetav, W;\vx, y) = 0.
\end{align}
If the l.h.s.~is true, then $\vw_y^\top \vg_\thetav \geq \vw_c^\top \vg_\thetav$ for any $c$. This gives 
$$
\frac{\vw_y^\top \vg_\thetav}{\|\vg_\thetav\|} \geq \frac{\vw_c^\top \vg_\thetav}{\|\vg_\thetav\|}, \mbox{ for any }c.
$$
From eq.~\ref{eq:indicator_softmax_2} we know $\varepsilon_F(\thetav, W;\vx, y) = 0$. Similarly, we can show
\begin{align}
\varepsilon_F(\thetav, W;\vx, y) = 0 \Longrightarrow \varepsilon(\thetav, W;\vx, y) = 0.
\end{align}
This concludes the proof of FN.
\end{proof}


Note that the assumption $\feature \neq \zero$ can be removed if we use $\max\{\epsilon, \|\feature\|\}$ in our feature normalization.
Similarly, we can show that pre-activation/post-activation layer normalization is not more expressive than vanilla models. From Prop.~\ref{prop:reduce_LN}, a pre-activation LN network can be represented as:
\begin{align}\label{eq:pre-activation-ln}
\va_L = \vg_\thetav(\vx) := \rho \circ \nc_{\sf MV}(\mU_L \va_{L-1} + \vb_L), \, \va_i = \rho \circ {\sf s} (\mU_i \va_{i-1} + \vb_i), \mbox{ for }i\in [L-1],
\end{align}
with $\vb_i = \zero$ for $i = 2, \dots, L$ and $\va_0 = \vx$. We use $\varepsilon_{pL}$ to denote the accuracy function of pre-activation LN, and $\varepsilon_{Lp}$ to denote post-activation LN.

\begin{propp}{3.B}[{expressive power of LN}]\label{prop:expressive_ln}
For any model parameter $(\thetav, \mW)$ of pre-activation layer normalization, one can find $(\thetav', \mW)$ of a vanilla model such that
$
\varepsilon(\thetav', \mW;\vx, y) = \varepsilon_{pL}(\thetav, \mW;\vx, y),$  for any $\vx, y.$
For any model parameter $(\thetav, \mW)$ of post-activation layer normalization, one can find $(\thetav', \mW')$ of a vanilla model such that
$
\varepsilon(\thetav', \mW';\vx, y) = \varepsilon_{Lp}(\thetav, \mW;\vx, y), \mbox{ for any }\vx, y.
$
\end{propp}

\begin{proof}
For the proof of LN networks, first observe that pre-activation layer normalization is equivalent to eq.~\ref{eq:pre-activation-ln}. Each layer of the form ($i = 1, 2, \dots, L$):
\begin{align}
\va_i = \rho\circ {\sf s}(\mU_i \va_{i-1} + \vb_i), 
\end{align}
is equivalent to:
\begin{align}
\va_i = \rho\left(\mP_i \mU_i \va_{i-1} + \mP_i \vb_i\right),
\end{align}
where $\mP_i = \mI - \frac{1}{d_i} \one \one^\top$ is a projection matrix and $d_i$ is the dimension of $\va_i$. Thus, we can take $\mU'_i = \mP_i \mU_i$ and $\vb'_i =  \mP_i \vb_i$ so that:
\begin{align}
\rho\circ {\sf s}(\mU_i \va_{i-1} + \vb_i) = \rho(\mU'_i \va_{i-1} + \vb'_i).
\end{align}
Also note that the normalization factor in the last layer does not affect the final prediction, as seen from \Cref{prop:equiv_un_fn}. Therefore, collecting all the $\mU_i'$ and $\vb'_i$ for $i = 1, 2, \dots, L$ we obtain the required $(\thetav', \mW)$. 

We can show a similar conclusion for post-activation LN. Suppose $(\mU_1, \mU_2, \dots, \mU_L, \vb_1, \mW)$ are the parameters of an LN network. We can first take $\mU'_1 = \mU_1$, $\vb'_1 = \vb_1$ such that:
\begin{align}
\vz_1 := \rho(\mU_1 \vx + \vb_1) = \va_1,
\end{align}
where $\va_1$ is the output of the first layer of a vanilla network. Denote $\va'_i = {\sf s}(\vz_i)$ and $\vz_i = \rho(\mU_i \va'_{i-1} + \vb_i)$ for $i\geq 1$. Assume $\vz_{i-1} = \va_{i-1}$ where $\va_{i}$ is the output of the $i^{\rm th}$ layer of a vanilla network with parameters $(\mU'_1, \mU'_2, \dots, \mU'_L, \vb'_1, \mW')$, then we can prove that for $i \geq 2$:
\begin{align}
\vz_i = \rho(\mU_i \va'_{i-1}) = \rho(\mU_i {\sf s}(\vz_{i-1})) = \rho(\mU_i \mP_{i-1} \va_{i-1}) = \rho(\mU'_i \va_{i-1}) = \va_i,
\end{align}
if we take $\mU'_i = \mU_i \mP_{i-1}$. Therefore, by induction, we can prove that $\vz_i = \va_i$ for $i = 1,2, \dots, L$. Specifically, we have $\vz_L = \rho(\mU_L \va'_{L-1} + \vb_L) = \va_L$. 

Since $\varepsilon_{Lp}(\thetav, \mW;\vx, y) = 0$ iff:
\begin{align}\label{eq:post_LN}
\vw_y^\top \va'_L \geq \vw_c^\top \va'_L, \mbox{ for all }c\in [C],
\end{align}
and $\va'_L = \nc_{\sf MV}(\vz_L) = \frac{{\sf s}(\vz_L)}{\sigma(\vz_L)} = \mP_L \vz_L / \sigma(\vz_L)$, with $\mP_L = \mI - \frac{1}{d_L} \one\one^\top$, eq.~\ref{eq:post_LN} is equivalent to:
\begin{align}\label{eq:post_LN_2}
\vw_y^\top \mP_L \va_L \geq \vw_c^\top \mP_L \va_L, \mbox{ for all }c\in [C].
\end{align}
Hence, taking $\mW' = \mP_L \mW$ gives the desired vanilla network.
\end{proof}

Next, we can prove the following corollary:

\begin{restatable}[]{corollary}{optVal}\label{thm:optval}
Suppose $\varepsilon^*$, $\varepsilon^*_F$ and $\varepsilon_L^*$ are the optimal prediction errors, by vanilla, feature-normalized and (pre/post-activation) layer-normalized neural networks respectively on a fixed dataset $S$:
\begin{align}
\varepsilon_{\varnothing, F, L}^* = \inf_{\thetav, \mW} \Eb_{(\vx, y)\in S} \, \varepsilon_{\varnothing, F, L}(\thetav, \mW; \vx, y), 
\end{align}
Then we have: $\varepsilon^* = \varepsilon_F^* \leq \varepsilon_L^*$. Note that $\varepsilon_\varnothing$ has the same meaning as $\varepsilon$.
\end{restatable}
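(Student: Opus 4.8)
The plan is to read the corollary off the two pointwise expressive-power statements, \Cref{prop:equiv_un_fn} and \Cref{prop:expressive_ln}, by averaging the $0$--$1$ error over $S$ and then taking infima over parameters. For the equality $\varepsilon^{*}=\varepsilon_{F}^{*}$: \Cref{prop:equiv_un_fn} gives $\varepsilon(\thetav,\mW;\vx,y)=\varepsilon_{F}(\thetav,\mW;\vx,y)$ for every $(\thetav,\mW)$ and every $(\vx,y)$ with $\feature(\vx)\neq\zero$. I would first promote this to all $(\thetav,\mW)$ using the $\epsilon$-regularized normalization $\vx\mapsto\vx/\max\{\epsilon,\|\vx\|\}$ flagged right after \Cref{prop:equiv_un_fn}: if $\feature(\vx)=\zero$ both the vanilla and FN heads receive the input $\vzero$ and output $\softmax(\vzero)$, while if $\feature(\vx)\neq\zero$ the FN head feeds a strictly positive rescaling of $\mW\feature(\vx)$ into the softmax, which leaves the $\argmax$ set, hence the $0$--$1$ error, unchanged. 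Thus $\Eb_{(\vx,y)\in S}\,\varepsilon(\thetav,\mW;\vx,y)=\Eb_{(\vx,y)\in S}\,\varepsilon_{F}(\thetav,\mW;\vx,y)$ identically in $(\thetav,\mW)$, and taking $\inf_{\thetav,\mW}$ of both sides gives $\varepsilon^{*}=\varepsilon_{F}^{*}$.

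For the inequality $\varepsilon^{*}\leq\varepsilon_{L}^{*}$ (in either the pre- or post-activation variant): \Cref{prop:expressive_ln} supplies, for every LN parameterization $(\thetav,\mW)$, a vanilla parameterization $(\thetav',\mW')$ with $\varepsilon(\thetav',\mW';\vx,y)=\varepsilon_{L}(\thetav,\mW;\vx,y)$ for all $(\vx,y)$ — intuitively, after folding the shift ${\sf s}$ into each $\mU_{i}$ through the projection $\mP_{i}=\mI-\tfrac1{d_{i}}\one\one^{\top}$ and dropping the prediction-irrelevant last-layer rescaling, LN networks form a subclass of vanilla networks. Averaging over $S$, the map $(\thetav,\mW)\mapsto(\thetav',\mW')$ sends every achievable LN average error to an achievable vanilla average error, so $\varepsilon^{*}=\inf_{\thetav',\mW'}\Eb_{(\vx,y)\in S}\,\varepsilon(\thetav',\mW';\vx,y)\leq\inf_{\thetav,\mW}\Eb_{(\vx,y)\in S}\,\varepsilon_{L}(\thetav,\mW;\vx,y)=\varepsilon_{L}^{*}$. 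Chaining the two yields $\varepsilon^{*}=\varepsilon_{F}^{*}\leq\varepsilon_{L}^{*}$.

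Since this is essentially bookkeeping on top of \Cref{prop:equiv_un_fn} and \Cref{prop:expressive_ln}, I do not expect a substantial obstacle; the one point that needs care — and thus the main thing to watch — is the degenerate configurations: zero feature embeddings for FN, and hidden activations proportional to $\one$, which make $\nc_{\sf MV}$ ill-defined and underlie the reduction (\Cref{prop:reduce_LN}) behind \Cref{prop:expressive_ln}. I would dispose of these either via the $\epsilon$-regularized normalization above or by restricting attention to non-degenerate parameters, which already witness the relevant infima; and I would record explicitly that the $\varepsilon^{*}\leq\varepsilon_{L}^{*}$ half, inheriting \Cref{prop:expressive_ln}, is understood under \Cref{assmp:scale_equiv}.
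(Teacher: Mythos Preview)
Your proposal is correct and follows essentially the same approach as the paper: both derive $\varepsilon^{*}=\varepsilon_{F}^{*}$ from the pointwise identity in \Cref{prop:equiv_un_fn} and $\varepsilon^{*}\leq\varepsilon_{L}^{*}$ from the parameter map in \Cref{prop:expressive_ln}, then pass to infima (the paper phrases the latter via a minimizing sequence, you via the observation that achievable LN errors are achievable vanilla errors, which is the same content). Your explicit handling of the degenerate configurations is more careful than the paper's proof, which simply invokes the two propositions without further comment.
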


\begin{proof}
Suppose for an LN network, $(\thetav_i^*, \mW_i^*)$ is a sequence that can achieve prediction error $\varepsilon_L^* + \delta_i$ with $\delta_i \downarrow 0$. Then from \Cref{prop:expressive_ln}, there exists $({\thetav'_i}^*, {\mW'_i}^*)$ of a vanilla network which can achieve prediction error $\varepsilon_L^* + \delta_i$ and $\delta_i \downarrow 0$, and thus $\varepsilon^* \leq \varepsilon_L^*$. Similarly, we can show $\varepsilon^* = \varepsilon_F^*$ from \Cref{prop:equiv_un_fn}.
\end{proof}

\subsection{Proof of divergent norms}

\divergent*

\begin{proof}
We first show that it is possible and necessary to have $(\vw_c - \vw_k)^\top \feature(\vx_i) < 0$ for all $\vx_i \in S_k$ and $c\neq k$. Suppose the $j^{\rm th}$-layer affine transformation is $\mA_j(\va) = \mU_j \va + \vb_j$, with $\mU_j \in \Rb^{d_j\times d_{j-1}}$ and $\vb_j\in \Rb^{d_j}$. 

Since $\rho \neq 0$, there exists $x_0 \in \Rb$ such that $\rho(x_0) \neq 0$. For the possibility, by taking $\mU_1 = \zero$, $\vb_1 = x_0 \one$ and $\mU_j = \frac{x_0}{\rho(x_0)d_{j-1}}\one \one^\top$, $\vb_j = \zero$ for $j \geq 2$, we have $\va_j = \rho(x_0) \one$ for all $j \geq 1$, and specifically
$\feature = \va_L = \rho(x_0) \one$. If we choose $\vw_k = \rho(x_0) \one$ and $\vw_c = -\rho(x_0) \one$ for all $c\neq k$, then $(\vw_c - \vw_k)^\top \feature(\vx_i) < 0$ can be achieved. Moreover, by replacing $\mW$ with $t\mW$ and taking $t\to \infty$, the loss $f_k(\thetav, \mW)$ can be arbitrarily close to zero.

For the necessity, suppose otherwise $(\vw_c - \vw_k)^\top \feature(\vx_i) \geq 0$ for some $\vx_i \in S_k$ and $c \neq k$. Then $f_k(\thetav, \mW) \geq \frac{1}{m_k}\log C$. However, we have seen in the last paragraph that the infimum of $f_k$ is zero. 

Now suppose $(\vw_c - \vw_k)^\top \feature(\vx_i) < 0$ for all $\vx_i \in S_k$ and $c\neq k$ and all the three claims are false. Then $\|\vw_k\|$ is bounded, $\|\vw_c\|$ is bounded for some $c\neq k$, and $\|\feature(\vx_i)\|$ is bounded for some $\vx_i$. So $f_k(\thetav, \mW) \geq \ell(\thetav, \mW; \vx_i, k)/m_k$ is lower bounded. This is a contradiction. 
\end{proof}

\section{Experimental Setup}
In this appendix, we provide details for our experimental setup.

\subsection{Datasets and models} 
To evaluate the performance of our proposed method we use common FL benchmarks. We use four datasets: CIFAR-10/100 \citep{krizhevsky2009learning}, TinyImageNet \citep{le2015tiny}, and PACS \citep{li2017deeper}. For CIFAR-100, we use the CNN network \citep{fukushima1975cognitron}, and for CIFAR-10, we test both CNN and ResNet-18 \citep{he2016deep}. For the remaining datasets, we use ResNet-18. Since we are investigating the effect of normalization on the performance of the FL, we use ResNet-18 without any normalization (i.e., CNN + skip connection) as our base method. In ResNet-18, we add layer norm (LN), group norm \citep[GN,][]{wu2018group} or batch norm (BN) after each activation, and in ResNet-18 with feature norm (FN), we only apply scale normalization to the output features of the network before the last fully connected layer. Similarly, for CNN models, we add normalization methods after each activation. For all the normalization methods including FN, we set $\epsilon=1e^{-5}$ for stability (see the preliminaries). For the implementation of LN/GN/BN, we may track the running statistics and average all the parameters including the running means and running variances. 

\begin{table}[!h]
  \caption{CNN model for CIFAR-10 dataset. we denote $k$ as the kernel size, $c$ as the number of channels, and $s$ as the size of the stride.}
  \label{tbl:cnn_architecture}
  \centering
  \begin{tabular}{lcccc}
  \toprule
    \multirow{2}{*}{Layer} & \multicolumn{4}{c}{Models}                   \\
    \cmidrule(r){2-5}
                           & Vanilla           & LN                        & BN                & FN\\
    \midrule        
    Input (shape)                & (28, 28, 3)       & (28, 28, 3)               & (28, 28, 3)       & (28, 28, 3)   \\
    Conv2d ($k, c, s$)             & (5, 64, 1)        & (5, 64, 1)                & (5, 64, 1)        & (5, 64, 1)   \\
    Normalization                & -                 & layer norm                & batch norm        & -  \\
    MaxPool2d                    & (2, 2)            & (2, 2)                    & (2, 2)            & (2, 2)  \\
    Conv2d ($k, c, s$)             & (5, 64, 1)        & (5, 64, 1)                & (5, 64, 1)        & (5, 64, 1)  \\
    Normalization                & -                 &  layer norm               & batch norm        & -  \\
    MaxPool2d                    & (2, 2)            & (2, 2)                    & (2, 2)            & (2, 2)  \\
    Flatten (shape)              & 1600              & 1600                      & 1600              & 1600  \\
    Dense (in $c$, out $c$)                 & (1600, 384)       & (1600, 384)               & (1600, 384)       & (1600, 384)  \\
    Normalization                & -                 & layer norm                & batch norm        & feature norm  \\
    Dense (in $c$, out $c$)                 & (384, 10/100)     & (384, 10/100)             & (384, 10/100)     & (384, 10/100)  \\
    
    \bottomrule
  \end{tabular}
\end{table}

In order to preserve the scale invariance (equivariance), we remove all the biases for the models. We summarize the CNN model structure with different normalization layers in \Cref{tbl:cnn_architecture}. We use the same architecture except for the last layer for both CIFAR-10 and CIFAR-100. We keep ReLU as our activation function, and softmax for the last layer.

\subsection{Data partitioning}

Our client datasets are created through partitioning common datasets including CIFAR-10, CIFAR-100, TinyImageNet and PACS. We consider two types of data partitioning to simulate label shift in practice:
\begin{itemize}[topsep=0pt,parsep=0pt]
\item $n$ class(es): in this setting, each client has access to samples from only $n$ classes.
\item Dirichlet partitioning: we partition samples from each class with a symmetric Dirichlet distribution and parameter $\beta$. For each client, we collect one partitioned shard from each class. We denote this with the shorthand notation Dir($\beta$). See \citet{zhang2023proportional} as an example. 
\end{itemize}

Our setup includes 10 clients for the CIFAR-10 dataset, 50 and 20 clients for the CIFAR-100 dataset, 200 clients for TinyImageNet, and 12 clients for PACS. We distribute the entire training and test datasets among clients in a non-i.i.d.~manner. More specifically, we use one class per client, two classes per client, and Dirichlet allocation with $\beta = 0.1$ for the CIFAR-10 dataset as shown in \Cref{fig:cifar10distributions}. For the CIFAR-100 dataset, we use two classes per client (50 clients), 5 classes per client (20 clients), and Dirichlet allocation with $\beta = 0.1$ (20 clients). For TinyImageNet, we apply Dirichlet allocation with $\beta = 0.1, 0.2, 0.5$.

Finally, for the PACS dataset, to add distribution shift to the clients, we first split the dataset into 4 groups, photo (P), art painting (A), clipart (C) and sketch (S). For each group, we split the data into 3 clients, with Dirichlet partitioning Dir(0.5) and Dir(1.0), as well as partitioning into disjoint sets of classes (two clients have two classes each, and the other client has three classes of samples). 


Statistics on the number of clients and examples in both the training and test splits of the datasets are given in \Cref{tbl:dataset-table}. To illustrate the data partitioning, we include the label distributions for each client for CIFAR-10 and PACS. The label distributions of the CIFAR-10 dataset are shown in \Cref{fig:cifar10distributions}. For the PACS dataset, the label distributions are shown in \Cref{fig:pacsdistributions}. 


\begin{table}[!t]
  \caption{Dataset distribution for datasets.}
  \label{tbl:dataset-table}
  \centering
  \begin{tabular}{llllll}
    \toprule
    Dataset         & Train clients     & Train examples        & Test clients    & Test examples     & \# of classes  \\
    \midrule
    CIFAR-10        & 10                & 50,000                & 10               & 10,000            & 10  \\
    CIFAR-100       & 50/20             & 50,000                & 50/20            & 10,000            & 100 \\
    TinyImageNet    & 20                & 25,000                & 20               & 25,000            & 200  \\
    PACS            & 12                & 7,907                 & 12              & 1,920               & 7     \\
    \bottomrule
  \end{tabular}
\end{table}

\begin{figure} [!t]
    \centering
\includegraphics[width=\textwidth]{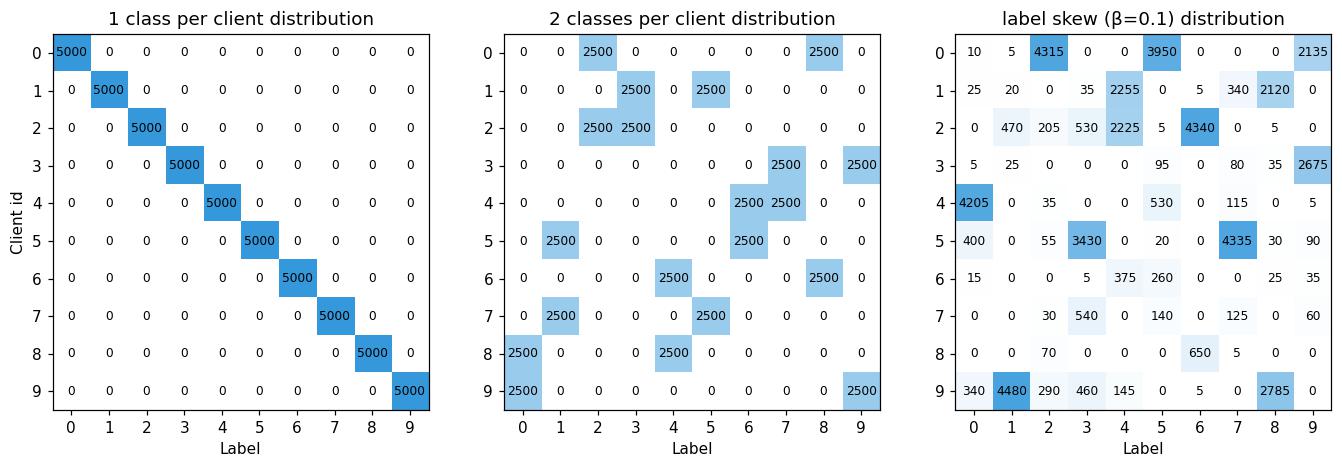}
    \caption{Data distribution of CIFAR-10 dataset across clients with label shift. ({\bf left}) one-class: each client has access to one-class; ({\bf middle}) two classes: each client has access to two classes; ({\bf right}) Dirichlet (0.1):  Dirichlet allocation with $\beta$ = 0.1.}
    \label{fig:cifar10distributions}
\end{figure}

\begin{figure} [!t]
    \centering
\includegraphics[width=\textwidth]{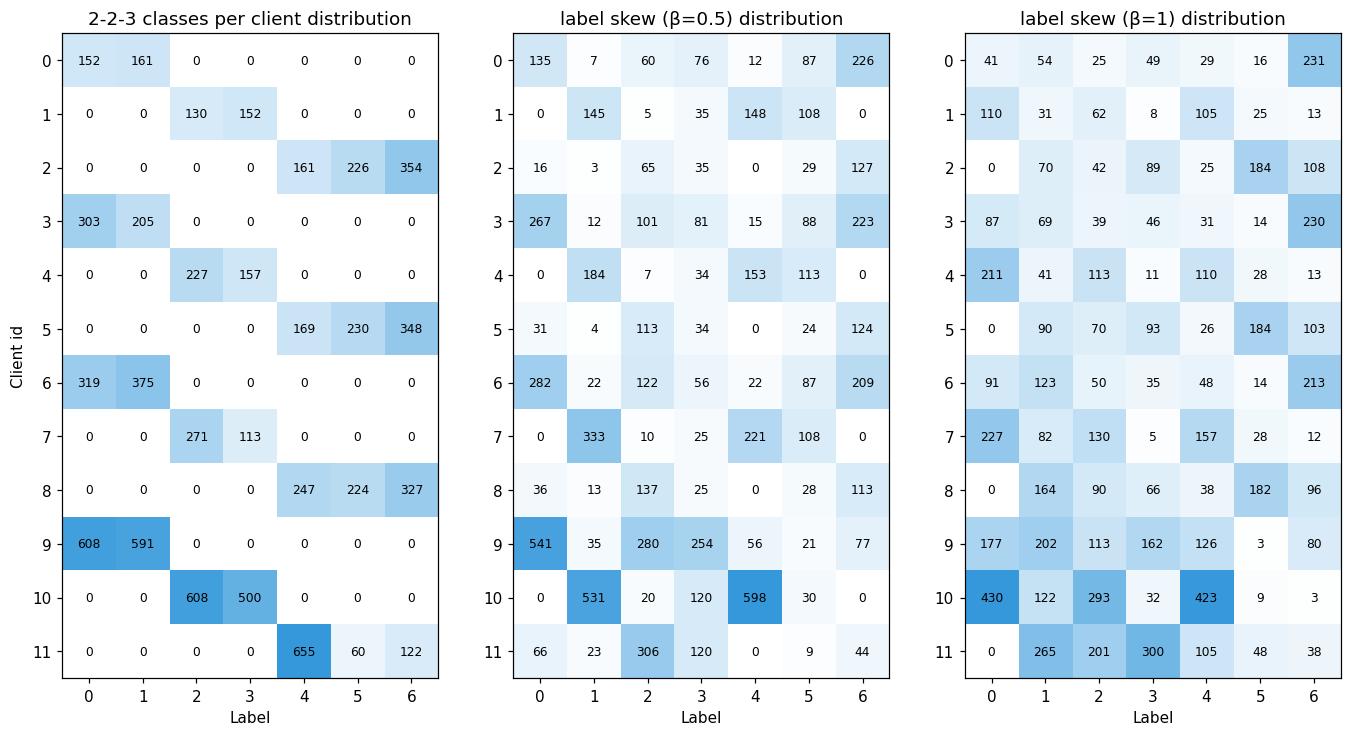}
    \caption{Data distribution of the PACS dataset across clients with label shift. ({\bf left}) 2-2-3 classes: each client has access to two classes except the last client who has access to three classes; ({\bf middle}) Dirichlet allocation with $\beta = 0.5$; ({\bf right}) Dirichlet allocation with $\beta = 1$.} 
    \label{fig:pacsdistributions}
\end{figure}

\subsection{Optimizers, hyperparameters, and validation metrics} We use SGD with a 0.01 learning rate ({\tt lr}) and batch size of 32 for all of the experiments except for $E=1$ experiments in CIFAR-100 in which we take $\texttt{lr} = 0.1$ as the learning rate and $\texttt{lr} = 0.001$ for PACS. We use SGD with a momentum of 0.9 only for our centralized training baseline. In each client, we take $E$ {steps} of local training in which we iterate $E$ batches of data per client. We use online augmentation with random horizontal flip and random cropping with padding of 4 pixels for all of the datasets. Moreover, we test and utilize FedYogi \citep{reddi2020adaptive} as a server adaptive optimizer method in combination with FN. All the reported experiments are done with 10,000 global rounds. We report the accuracy of clients on their test data in the last iteration of our experiments \blue{with a single run due to computational limitation}. Our hyperparameter choices are summarized in \Cref{tbl:hyperparam}. 

\begin{table}[t!]
  \caption{Hyper-parameter choices.}
  \label{tbl:hyperparam}
  \centering
  \resizebox{0.9\textwidth}{!}{\begin{tabular}{lccccc}
    \toprule
    Dataset    &CIFAR-10 (CNN)  & CIFAR-10 (ResNet)    & CIFAR-100       & PACS      & TinyImageNet   \\    
    \midrule
    Learning rate      & 0.01        & 0.01        & 0.1        & 0.001        & 0.01     \\
    Batch size      & 32        & 32        & 32        & 32        & 32     \\
    Optimizer      & SGD        & SGD        & SGD        & SGD        & SGD     \\
    Augmentation      &\checkmark        & \checkmark        & \checkmark        & \checkmark        & \checkmark     \\
    \bottomrule
  \end{tabular}}
\end{table}


\subsection{Benchmarks} \label{apx:benchmarks}

In this section, we provide the implementation details and hyperparameters of our benchmark experiments.  

\textbf{FedProx.}
For the implementation of FedProx, as stated in \citet{li2020federated}, we add a weighted regularizer to the loss function of each client during local training to penalize the divergence between the server model and clients' models. To set the weight of the regularizer (or $\mu$ in the original paper), we sweep the datasets for weights in ${0.001, 0.01, 0.1, 1}$ and select the best-performing value in the Dirichlet distribution with $E=20$ for each dataset. Our results show that $0.01$ was the best-performing value for the weights for all the datasets. For the rest of the hyperparameters, we use the hyperparameters as FedAvg.

\textbf{SCAFFOLD.}
For the implementation of SCAFFOLD, we use Option II as described in \citet{karimireddy2020scaffold}. We use the same setting and hyperparameters as FedAvg. More specifically, clients perform $E$ steps of local training while clients control variates using the difference between the server model and their local learned model.

\textbf{FedDecorr.}
Similar to FedProx, FedDecorr \citep{shi2023towards} applies a weighted regularization term during local training that encourages different dimensions of representations to be uncorrelated. We utilize their code for the implementation. In their results, the best-performing weight for the regularization term (or $\alpha$ in the original paper) was 0.5. Therefore, we set the weight to 0.5 for all the datasets. For the rest of the hyperparameters, we use the same setting and hyperparameters as FedAvg.

\textbf{FedLC.}
As described in \citet{zhang2022federated}, for the implementation of FedLC, during the local training, we shift the output logits of the network for each class depending on the number of available samples of each class for that client. This allows clients to add a weighted pairwise label margin according to the probability of occurrence of each class. We set the weight (or $\tau$ in the original paper) to 1.  As shown in \Cref{tbl:comp-sota}, FedLC fails in the one-class setting since clients do not have two classes to create a label margin. For the rest of the hyperparameters, we use the same setting and hyperparameters as FedAvg.

\textbf{FedRS.}
Similar to FedLC, FedRS \citep{li2021fedrs} attempts to address the label shift in clients by limiting the update of missing classes’ weights during the local training. FedRS multiplies the output logits of the missing classes by a weighting parameter ($\alpha$ in the original paper) and keeps the rest of the logits intact. As described in \citet{li2021fedrs}, 0.5 was the best-performing weight for all the datasets, and therefore, we set the weight to 0.5 for our experiments. For the rest of the hyperparameters, including the learning rate, we use the same hyperparameters as FedAvg.

\textbf{FedYogi} is one of the server-side adaptive optimization methods described in \citet{reddi2020adaptive}. Basically, it utilizes a separate learning rate for the server to calculate the global model for the clients. We utilize the best-performing values for the CIFAR-10 experiments in \citet{reddi2020adaptive}. More specifically, we set the server learning rate to 0.01 ($\eta$ in the original paper), $\beta_1$, $\beta_2$, and $\tau$ to 0.9, 0.99, and 0.001 respectively. We also initialize vector $v_t$ to $1e^{-6}$. For the rest of the hyperparameters, including the clients' learning rates, we use the same hyperparameters as FedAvg.

\subsection{Hardware} Our experiments are run on a cluster of 12 GPUs, including NVIDIA V100 and P100.

\section{Experiment Configurations for Figures}\label{apx:figures}

We provide details for the experimental setup of the figures in our main paper. Unless otherwise specified, we fix the local step number $E = 10$ in our experiments. 






\paragraph{\Cref{fig:forgetting}.}
In this experiment, we first train a vanilla CNN model with CIFAR-10 dataset in the one-class setting using FedAvg for 10000 epochs with 10 local steps and report the class accuracy of the global model ({\bf left}) using the same setup as rest of the paper. Then, we locally train this model using examples belonging to class 0 (i.e. data of client 0) for 5 steps with FN ({\bf right}) and without FN ({\bf middle}) and report the class accuracy of these two models in the entire dataset. Note that FN networks and vanilla networks are the same except the constraint on the feature embedding.

\paragraph{\Cref{fig:one_class_exp}.} 
In this one-class setup, for the three figures on the left, we only locally train client 1 with its local data which only contains class 1. 

\paragraph{\Cref{fig:fedavg_vs_methods}.}

We use SGD with batch size 32 for FedAvg with 10 local steps and batch size of 320 for centralized training. We use 0.01 as the learning rate for both centralized and FedAvg. The batch size 320 is chosen to match federated and centralized learning.


\paragraph{\Cref{fig:centralized}.}
We use batch size 32, learning rate 0.001, and SGD with 0.9 momentum for the centralized training.



\section{Additional Experiments}

\paragraph{Additional local steps and different neural architectures.} We provide additional experiments with different local steps, on CIFAR-10/100 and CNN architecture in Tables \ref{tbl:cifar10-cnn} and \ref{tbl:cifar100-cnn}. This table shows our results are consistent with different local steps. As label shift becomes more severe, the advantage of FN/LN is clearer. Note that $E = 1$ it reduces to the centralized setting with $320$ batch size (since each client has batch size $32$), but due to label shift not every algorithm can converge well. 

\begin{table}[!t]
  \caption{SoTA comparison on CIFAR-10 with CNN.}
  \label{tbl:cifar10-cnn}
  \centering
  \resizebox{\textwidth}{!}{\begin{tabular}{lccccccccc}
    \toprule
    \multirow{2}{*}{Methods} & \multicolumn{3}{c}{1 class}  & \multicolumn{3}{c}{2 classes}  & \multicolumn{3}{c}{Dir(0.1)}                        \\
    \cmidrule(r){2-4} \cmidrule(r){5-7} \cmidrule(r){8-10} 
       & $E = 1$     & $E = 10$       & $E = 20$      & $E = 1$       & $E = 10$       & $E = 20$      & $E = 1$       & $E = 10$       & $E = 20$   \\
    \midrule
    FedAvg    & 57.45      & 55.86        & 58.65        & 57.35        & 71.83        & 73.16        & 52.82        & 73.06        & 74.58     \\
    FedProx   & 57.39      & 54.24        & 58.31        & 57.42        & 71.86        & 72.78        & 53.94        & 72.85        & 74.72     \\
    SCAFFOLD  & 57.41      & 54.14        & 57.97        & 57.45        & 71.88        & 73.44        & 52.87        & 72.38        & 75.20     \\
    FedLC     & 9.72       & 9.72         & 9.72         & 39.35        & 62.01        & 64.49        & 45.08        & 70.25        & 71.98     \\
    FedDecorr & 47.50      & 48.18        & 48.09        & 47.73        & 70.88        & 73.43        & 45.46        & 74.58        & 75.31     \\
    FedRS     & 10.00      & 10.00        & 10.00        & 37.96        & 63.26        & 64.06        & 44.82        & 70.60        & 72.63      \\
    FedYogi   & \un{77.61}      & 73.13        & 74.21        & \un{76.77}        & \bf 78.97        & \bf 77.12        & \un{76.46}        & \bf 79.35        & \bf  77.99    \\
    \midrule
    FedFN        & 77.52      & \un{77.14}        & \un{77.35}        & 74.52        & 76.67        & 76.12        & 75.73        & 78.13        & 77.35      \\ 
    FedLN        & \bf 79.25      & \bf 78.04        & \bf 78.12        & \bf 77.39        & \un{77.52}        & \un{76.93}        & \bf 77.10        & \un{78.27}        & \un{77.48}       \\
    \bottomrule
  \end{tabular}}
\end{table}

\begin{table}[!t]
  \caption{SoTA comparison on CIFAR-100 with CNN.}
  \label{tbl:cifar100-cnn}
  \centering
  \resizebox{\textwidth}{!}{\begin{tabular}{lccccccccc}
    \toprule
    \multirow{2}{*}{Methods} & \multicolumn{3}{c}{2 classes}  & \multicolumn{3}{c}{5 classes}  & \multicolumn{3}{c}{Dir(0.01)}                        \\
    \cmidrule(r){2-4} \cmidrule(r){5-7} \cmidrule(r){8-10} 
      & $E = 1$      & $E = 10$       & $E = 20$      & $E = 1$       & $E = 10$       & $E = 20$      & $E = 1$       & $E = 10$       & $E = 20$    \\
    \midrule
    FedAvg    & 39.38      & 32.33        & 33.57        & 41.81        & 36.22        & 36.65        & 40.92        & 38.14        & 38.12      \\
    FedProx   & 42.47      & 32.39        & 31.70        & 41.65        & 36.59        & 36.95        & 44.35        & 38.85        & 38.67      \\
    SCAFFOLD  & 41.40       & 32.30        & 34.43        & 41.46        & 36.57        & 37.06        & 44.55        & 38.7         & 38.69      \\
    FedLC     & 4.41       & 6.04         & 5.75         & 17.63        & 18.17        & 18.78        & 21.31        & 22.34        & 22.35      \\
    FedDecorr & 33.03      & 29.21        & 32.43        & 33.27        & 32.94        & 35.26        & 33.58        & 34.24        & 36.48      \\
    FedRS     & 6.35       & 10.73        & 10.81        & 22.30        & 24.88        & 25.29        & 21.78        & 22.19        & 24.13      \\
    FedYogi   & \un{46.66}      & \un{44.60}        & \un{43.98}        & \un{46.79}        & 44.08        & 43.89        & 48.10         & 45.20        & \un{44.96}      \\
    \midrule
    FedFN        & 46.51      & 40.32        & 43.70        & 45.51        & \un{45.41}        & \un{44.10}        & \un{48.15}        & \un{45.36}        & 44.52      \\
    FedLN        & \bf 48.72      & \bf 46.59        & \bf 45.89        & \bf 49.83        & \bf 46.22        & \bf 45.68        & \bf 51.21        & \bf 47.30        & \bf 45.53      \\
    \bottomrule
  \end{tabular}}
\end{table}

\paragraph{Combination with other techniques.} We can further improve the performance of normalization, by combining it with other techniques such as adaptive optimization \citep{reddi2020adaptive}. In \Cref{tbl:composition}, we show that combined with FedYogi \citep{reddi2020adaptive}, FedFN and FedLN has improved performance. We do not consider other combinations such as with FedProx or SCAFFOLD as they are not shown to help FedAvg much even in the unnormalized case.

\begin{table}[!t]
  \caption{Combining adaptive optimization with normalization. }
  \label{tbl:composition}
  \centering
  \resizebox{0.5\textwidth}{!}{\begin{tabular}{lccccccccc}
    \toprule
    \multirow{1}{*}{Methods}  & \multicolumn{1}{c}{1 class}  & \multicolumn{1}{c}{2 classes}  & \multicolumn{1}{c}{Dir(0.1)}                        \\
   \midrule
     FedAvg                          & 56.82               & 71.83                   & 73.06           \\
    FedYogi          & 73.13        & 78.97         & 79.35        \\
    \midrule
    FedFN  & 77.14       & 76.67            & 78.14     \\  
    FedLN       & 77.71       &  77.65           & 79.94   \\
    \midrule 
    FedFN + Yogi & \bf 80.47 & 78.83 & 80.35 \\
    FedLN + Yogi & 79.27 & \bf 79.16 & \bf 80.46 \\
    \bottomrule
  \end{tabular}}
\end{table}

\paragraph{Modifications of ResNet.} Last but not least, we test different modifications of ResNet in \Cref{tbl:cnn_resnet_2}. We denote $\nc_{\sf MV}$ inside as the following block:
$${\sf block}(\vx) = \rho(\vx + \mA_2 \circ \nc_{\sf MV} \circ \rho \circ \mA_1(\vx)),$$
and by $\nc$ inside we use:
$${\sf block}(\vx) = \rho(\vx + \mA_2 \circ \nc \circ \rho \circ \mA_1(\vx)),$$
for each block. Note that we always add the last-layer feature normalization for these two methods. We can see that both $\nc_{\sf MV}$ and $\nc$ help the improvement of FedFN, and they narrow the gap between FN and LN. 

\begin{table}[!t]
  \caption{Comparison between different modifications of ResNet.  }
  \label{tbl:cnn_resnet_2}
  \centering
  \begin{tabular}{lccccccccc}
    \toprule
    \multirow{1}{*}{Methods}  & \multicolumn{1}{c}{1 class}  & \multicolumn{1}{c}{2 classes}  & \multicolumn{1}{c}{Dir(0.1)}       \\
   \midrule
    FedLN - ResNet  & 86.15 & 88.01 & 89.06 \\
    FedFN - $\nc_{\sf MV}$ inside - ResNet  & 77.76 & 84.34 & 86.63 \\
     FedFN - $\nc$ inside - ResNet  & 82.14 & 84.05 & 85.17 \\
    FedFN - ResNet  & 80.81 & 82.30 & 84.19 \\
    \bottomrule
  \end{tabular}
\end{table}

\paragraph{Multiple seeds.} {
 \blue{
In \Cref{tbl:comp-sota}, we reported the last-iteration accuracy for a single run. In \Cref{tbl:seeds}, we take three runs for CIFAR-10 with one class settings for both CNN and ResNet models and report the mean and standard deviation for the three runs. Our results show that the comparison is consistent across different runs. Due to computation constraints we could not afford to run multiple seeds for all our experiments.}}

\begin{table}[!t]
  \caption{Results with multiple runs for CIFAR-10 dataset in one class settings with 10 clients for both CNN and ResNet models.}
  \label{tbl:seeds}
  \centering
  \blue{\begin{tabular}{lcc}
    \toprule
    \multirow{1}{*}{Methods}  & \multicolumn{1}{c}{CNN}  & \multicolumn{1}{c}{ResNet}        \\
   \midrule
     FedAvg                          & 55.76{$_{\pm0.55}$}  	               & 55.34{$_{\pm0.78}$}
        \\
    FedYogi          & 73.76{$_{\pm0.70}$}	        & 76.90{$_{\pm2.80}$}
              \\
    \midrule
    FedFN  & 76.85{$_{\pm0.37}$}	       & 80.55{$_{\pm0.17}$}
                \\  
    FedLN       & \bf 77.21{$_{\pm0.85}$}       &  \bf 83.40{$_{\pm4.90}$}          \\
    \bottomrule
  \end{tabular}}
\end{table}

\section{Additional Related Work}

\textbf{Normalization} is not a new idea, and it can be at least traced back to whitening. As suggested by \citet{lecun1998efficient}, we should shift the inputs so that the average is nearly zero and scale them to have the same covariance. \citet{simoncelli2001natural} also explained whitening in computer vision through principal components analysis. Normalization is also a standard technique in training neural networks if one treats the inputs as the intermediate layers. Depending on which dimensions to normalize, it could be batch normalization \citep{ioffe2015batch}, weight normalization \citep{salimans2016weight}, instance normalization \citep{ulyanov2016instance}, layer normalization \citep{ba2016layer}, and group normalization \citep{wu2018group}. Specifically, layer normalization is widely used in modern neural architectures such as Transformers \citep{vaswani2017attention}. More recently, \citet{zhang2019root} proposed Root Mean Square Layer Normalization (RMSNorm) which dispenses with the mean shift in layer normalization. This shares some similarities with our feature normalization. However, their motivation is quite different from ours.

There has been recent work that aims to understand normalization methods. For example, \citet{santurkar2018does} explained that batch normalization can help optimization; \citet{dukler2020optimization} provided convergence results for two-layer ReLU networks with weight normalization with NTK analysis. Lately, \citet{lyu2022understanding} studied the convergence of gradient descent with weight decay for scale-invariant loss functions. None of the previous work provided an analysis of the expressive power and local overfitting with softmax and cross-entropy loss.

\paragraph{Expressive power.} The expressive power of (normalized) neural networks has been studied in \citet{raghu2017expressive} and \citet{giannou2023expressive}. In \citet{raghu2017expressive}, the authors proposed a new measures of expressivity and trajectory regularization that achieves similar advantages as batch norm. In \citet{giannou2023expressive}, the authors argued that by tuning only the normalization layers of a neural network, we can recover any fully-connected neural network functions.
Neither of them provides the same result as our \Cref{prop:equiv_un_fn_0}.

\end{document}